\let\Pr\undefined
\def\Rset{\mathbb{R}}
\def\vcdim{\text{VCdim}}
\def\pdim{\text{Pdim}}
\DeclareMathOperator*{\E}{\mathbb{E}}
\DeclareMathOperator*{\Pr}{\mathbb{P}}
\DeclareMathOperator*{\argmin}{\rm argmin}
\DeclareMathOperator{\sgn}{sgn}
\newcommand{\set}[1]{\{#1\}}
\newcommand{\h}{\widehat}
\newcommand{\tl}{\widetilde}
\newcommand{\mat}[1]{{\mathbf #1}}
\renewcommand{\b}{\mat{b}}
\renewcommand{\u}{\mat{u}}
\newcommand{\w}{\mat{w}}
\newcommand{\x}{\mat{x}}
\newcommand{\cB}{\mathcal{B}}
\newcommand{\cL}{\mathcal{L}}
\newcommand{\cX}{\mathcal{X}}
\newcommand{\Ind}{\mathds{1}}
\newcommand{\1}{\mathds{1}}
\newcommand{\R}{\mathfrak{R}}
\newcommand{\e}{\epsilon}
\newcommand{\EQ}{\gets}
\newcommand{\wt}{\widetilde}
\newcommand{\ssigma}{{\boldsymbol \sigma}}
\newcommand{\bone}{b^{(1)}}
\newcommand{\btwo}{b^{(2)}}
\newcommand{\rev}{\mathrm{Revenue}}
\newcommand{\rstar}{r^*}
\newcommand{\ignore}[1]{}
\title{Learning Algorithms for Second-Price Auctions with Reserve}
\author{Mehryar Mohri \\
\addr{Courant Institute of Mathematical
    Sciences\\
251 Mercer Street \\
New York, NY}  \\
\AND Andres Mu\~noz Medina\\
\addr{Courant Institute of
Mathematical Sciences\\
251 Mercer Street\\
New York, NY
}
}
\begin{document}
\maketitle

\begin{abstract}

  Second-price auctions with reserve play a critical role in the
  revenue of modern search engine and popular online sites since the
  revenue of these companies often directly depends on the outcome of
  such auctions. The choice of the reserve price is the main mechanism
  through which the auction revenue can be influenced in these
  electronic markets. We cast the problem of selecting the reserve
  price to optimize revenue as a learning problem and present a full
  theoretical analysis dealing with the complex properties of the
  corresponding loss function. We further give novel algorithms for
  solving this problem and report the results of several experiments
  in both synthetic and real data demonstrating their effectiveness.

\end{abstract}

\section{Introduction}

Over the past few years, advertisement has gradually moved away from
the traditional printed promotion to the more tailored and directed
online publicity. The advantages of online advertisement are clear:
since most modern search engine and popular online site companies
such as as Microsoft, Facebook, Google, eBay, or Amazon, may collect
information about the users' behavior, advertisers can better target
the population sector their brand is intended for.

More recently, a new method for selling advertisements has gained
momentum. Unlike the standard contracts between publishers and
advertisers where some amount of impressions is required to be
fulfilled by the publisher, an Ad Exchange works in a way similar to a
financial exchange where advertisers bid and compete between each
other for an ad slot. The winner then pays the publisher and his ad is
displayed. 

The design of such auctions and their properties are crucial since
they generate a large fraction of the revenue of popular online sites.
These questions have motivated extensive research on the topic of
auctioning in the last decade or so, particularly in the theoretical
computer science and economic theory communities. Much of this work
has focused on the analysis of mechanism design, either to prove some
useful property of an existing auctioning mechanism, to analyze its
computational efficiency, or to search for an optimal revenue
maximization truthful mechanism (see \citep{muthukrishnan2009} for a
good discussion of key research problems related to Ad Exchange and
references to a fast growing literature therein).

One important problem is that of determining an auction mechanism that
achieves optimal revenue \cite{muthukrishnan2009}. In the ideal
scenario where the valuation of the bidders is drawn i.i.d.\ from a
given distribution, this is known to be achievable (see for example
\citep{Myerson1981}). But, even good approximations of such
distributions are not known in practice. Game theoretical approaches
to the design of auctions have given a series of interesting results
including \citep{Riley1981,Milgrom1982,Myerson1981,nisan2007}, all of
them based on some assumptions about the distribution of the bidders,
e.g., the monotone hazard rate assumption.

The results of the recent publications have nevertheless set the basis
for most Ad Exchanges in practice: the mechanism widely adopted for
selling ad slots is that of a \emph{Vickrey auction} \citep{Vickrey2012}
or \emph{second-price auction with reserve price $r$}
\citep{EasleyKleinberg2010}. In such auctions, the winning bidder (if
any) pays the maximum of the second-place bid and the reserve price
$r$. The reserve price can be set by the publisher or automatically by
the exchange. The popularity of these auctions relies on the fact
that they are incentive compatible, i.e., bidders bid exactly what they
are willing to pay. It is clear that the revenue of the publisher
depends greatly on how the reserve price is set: if set too low, the
winner of the auction might end up paying only a small amount, even if
his bid was really high; on the other hand, if it is set too high,
then bidders may not bid higher than the reserve price and the ad slot
will not be sold.

We propose a machine learning approach to the problem of determining
the reserve price to optimize revenue in such auctions. The general
idea is to leverage the information gained from past auctions to
predict a beneficial reserve price. Since every transaction on an
Exchange is logged, it is natural to seek to exploit that data. This
could be used to estimate the probability distribution of the bidders,
which can then be used indirectly to come up with the optimal reserve
price \citep{Myerson1981, ostrovsky2011reserve}. Instead, we will seek
a discriminative method making use of the loss function related to the
problem and taking advantage of existing user features.

Machine learning methods have already been used for the related
problems of designing incentive compatible auction mechanisms
\cite{Balcan2008, blum2003}, for algorithmic bidding
\cite{wortman2007,AminKearnsKeySchwaighofer2012}, and even for
predicting bid landscapes \cite{cui2011bid}. Another closely related
problem for which machine learning solutions have been proposed is
that of revenue optimization for sponsored search ads and click
through rate predictions \cite{ZhuWang, HeChen, Kakade09}. But, to our
knowledge, no prior work has used historical data in combination with
user features for the sole purpose of revenue optimization in this
context. In fact, the only publications we are aware of that are
directly related to our objective are \citep{ostrovsky2011reserve} and
the interesting work of \citet{Cesa-BianchiGentileMansour2013} which
considers a more general case than \citep{ostrovsky2011reserve}. The
scenario studied by Cesa-Bianchi et al. is that of censored
information, which motivates their use of a regret minimization
algorithm to optimize the revenue of the seller. Our analysis assumes
instead access to full information. We argue that this is a more
realistic scenario since most companies do in fact have access to the
full historical data.

The learning scenario we consider is more general since it includes
the use of features, as is standard in supervised learning. Since user
information is sent to advertisers and bids are made based on this
information, it is only natural to include user features in our
learning solution.  A special case of our analysis coincides with the
no-feature scenario considered by
\citet{Cesa-BianchiGentileMansour2013}, assuming full information.
But, our results further extend those of this paper even in that
scenario. In particular, we present an $O(m\log m)$ algorithm for
solving a key optimization problem used as a subroutine by the
authors, for which they do not seem to give an algorithm. We also do
not require an i.i.d.\ assumption about the bidders, although this is
needed in \citep{Cesa-BianchiGentileMansour2013} in order to deal with
censored information only. 

The theoretical and algorithmic analysis of this learning problem
raises several non-trivial technical issues. This is because, unlike
some common problems in machine learning, here, the use of a convex
surrogate loss cannot be successful. Instead, we must derive an
alternative non-convex surrogate requiring novel theoretical guarantees
(Section~\ref{sec:guarantees}) and a new algorithmic solution
(Section~\ref{sec:algorithms}). We present a detailed analysis of
possible surrogate losses and select a continuous loss that we prove
to be calibrated and for which we give generalization bounds. This
leads to an optimization problem cast as a DC-programming problem
whose solutions are examined in detail: we first present an efficient
combinatorial algorithm for solving that optimization in the
no-feature case, next we combine that solution with the DC algorithm
(DCA) \cite{TaoAn1998} to solve the general
case. Section~\ref{sec:experiments} reports the results of our
experiments with synthetic data in both the no-feature case and the
general case as well as on data collected from eBay. We first
introduce the problem of selecting the reserve price to optimize
revenue and cast it as a learning problem (Section~\ref{sec:res}).

\section{Reserve price selection problem}
\label{sec:res}

As already discussed, the choice of the reserve price $r$ is the main
mechanism through which a seller can influence the auction revenue.
To specify the results of a second-price auction we need only the
vector of first and second highest bids which we denote by $\b =
(\bone, \btwo) \in \cB \subset \Rset_+^2$.  For a given reserve price
$r$ and bid pair $\b$, the revenue of an auction is given by
\begin{equation}
\label{eq:revenue} 
\rev(r, \b) = \btwo \Ind_{r < \btwo} + r \Ind_{\btwo \leq r \leq \bone}.
\end{equation}
The simplest setup is one where there are no features associated with
the auction. In that case, the objective is to select $r$ to optimize the
expected revenue, which can be expressed as follows:

\begin{align*}
\E_{\b}[\rev(r, \b)] 
& = \E_{\btwo}[\btwo \1_{r < \btwo}] + r\Pr[\btwo \leq r \leq
\bone]  \\ 
& = \int_{0}^{+\infty} \Pr[\btwo \1_{r < \btwo} > t] \, dt + r\Pr[\btwo
\leq r \leq \bone] \\
& = \int_{0}^{r} \Pr[r < \btwo] \, dt + \int_r^\infty \Pr[\btwo > t]
dt + r \Pr[\btwo \leq r \leq \bone] \\
& = \int_r^\infty \Pr[\btwo > t] \, dt  
+ r (\Pr[\btwo > r] + 1 - \Pr[\btwo > r] -\Pr[\bone < r]) \\
& = \int_r^\infty
\Pr[\btwo > t] \, dt + r\Pr[\bone \geq r].
\end{align*}

A similar derivation is given by
\citet{Cesa-BianchiGentileMansour2013}. In fact, this expression is
precisely the one optimized by these authors. If we now associate with
each auction a feature vector $\x \in \cX$, the so-called \emph{public
  information}, and set the reserve price to $h(\x)$, where $h\colon
\cX \to \Rset_+$ is our reserve price hypothesis function, the problem
can be formulated as that of selecting out of some hypothesis set $H$
a hypothesis $h$ with large expected revenue:
\begin{equation}
\E_{(\x, \b) \sim D}[\rev(h(\x), \b)],
\end{equation}
where $D$ is the unknown distribution according to which the pairs
$(\x, \b)$ are drawn. Instead of the revenue, we will consider a loss
function $L$ defined for all $(r, \b)$ by $L(r,\b) = -\rev(r,\b)$, and
will seek a hypothesis $h$ with small expected loss $\cL(h) := \E_{(\x,
  \b) \sim D} [L(h(\x), \b)]$. As in standard supervised learning
scenarios, we assume access to a training sample $S = ((\x_1,
  \b_1), \ldots, (\x_m, \b_m) )$ of size $m \geq 1$ 
drawn i.i.d.\ according to $D$ and denote by $\h \cL_S(h)$ the
empirical loss $\frac{1}{m}\sum_{i = 1}^m L(h(\x_i, \b_i)$.  In the
next sections, we present a detailed study of this learning problem.

\section{Learning guarantees}
\label{sec:guarantees}

\ignore{
We first consider the following question raised by Muthukrishnan
\cite{muthukrishnan2009} which is implicitly asked in the case where
no feature is available: \emph{does there exist a (non-truthful)
  mechanism that can achieve almost optimal performance?} The proofs
of the technical results presented are given in the Appendix.

\subsection{No feature case}
\label{sec:no-feature}

In the absence of features, the hypothesis set can be defined by $H_1
= \set{r \in \Rset_+ \colon r \leq \Lambda}$, for some $\Lambda \geq
0$, with the associated family of loss functions $L_{H_1} = \set{(r,
  \b) \mapsto L(r, \b) \colon r \in H_1}$.

The loss function $L$ is shown in Figure~\ref{fig:realLoss}. It does
not admit any of the properties commonly assumed for such functions
and depends on the values of both bids, $\bone$ and
$\btwo$. Nevertheless, the following lemma, whose proof is included in
the Appendix, shows that the pseudo-dimension of $L_{H_1}$ is bounded
and is equal to $3$.

\begin{lemma}
\label{lemma:1d}
The pseudo-dimension of the family of losses associated to $H_1$ is
$\pdim(L_{H_1}) = 3$.
\end{lemma}

Combining this result with a standard pseudo-dimension bound
\cite{Vapnik} yields directly the following generalization bound for $H_1$.

\begin{proposition} 
  Let $M = \min (\Lambda, \max_{\b \in B} \bone)$. Then, for any
  $\delta > 0$, with probability at least $1 - \delta$ over the choice
  of a sample $S$ of size $m$, the following inequality holds for all
  $r\in H_1$:
\begin{equation*}
\cL(r) \leq \h \cL_S(r) + M \sqrt{\frac{6 \log \frac{em}{3}}{m}} + M \sqrt{\frac{\log \frac{1}{\delta}}{2m}}.
\end{equation*}
\end{proposition}
In view of this result, we can address the problem raised by
Muthukrishnan \cite{muthukrishnan2009}: with a sufficient amount of
data, by using empirical risk minimization or, perhaps even structural
risk minimization, we can derive an arbitrarily accurate approximation
of the expected revenue. In the next sub-section, we generalize this
result to the more general case where features are available and used.
}
\begin{figure}[t]
\centering
\begin{tabular}{c@{\hskip 1in}c}
\setlength{\tabcolsep}{100pt}
\includegraphics[scale=.45]{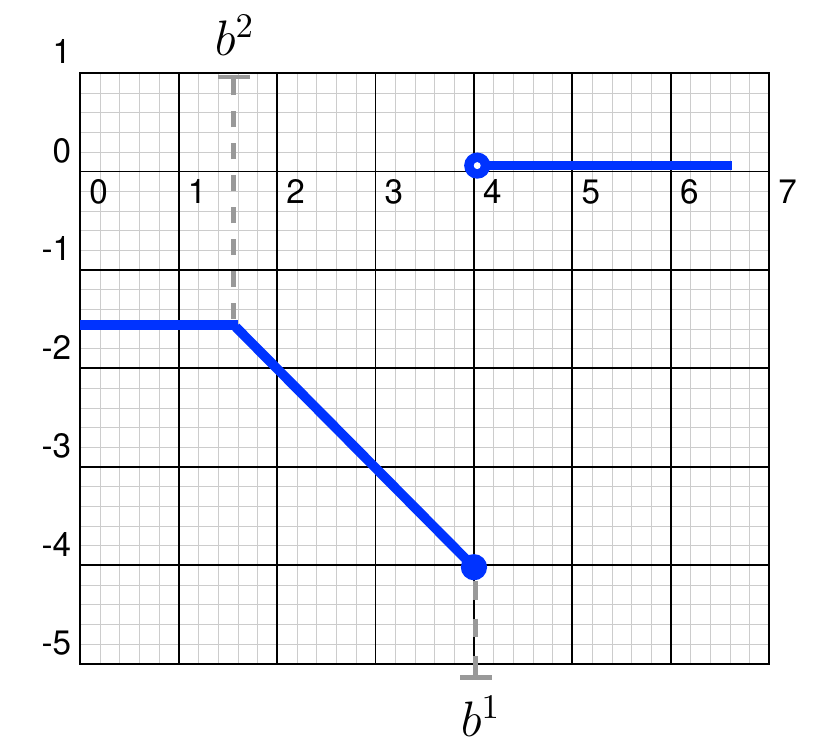}  &
\includegraphics[scale=.45]{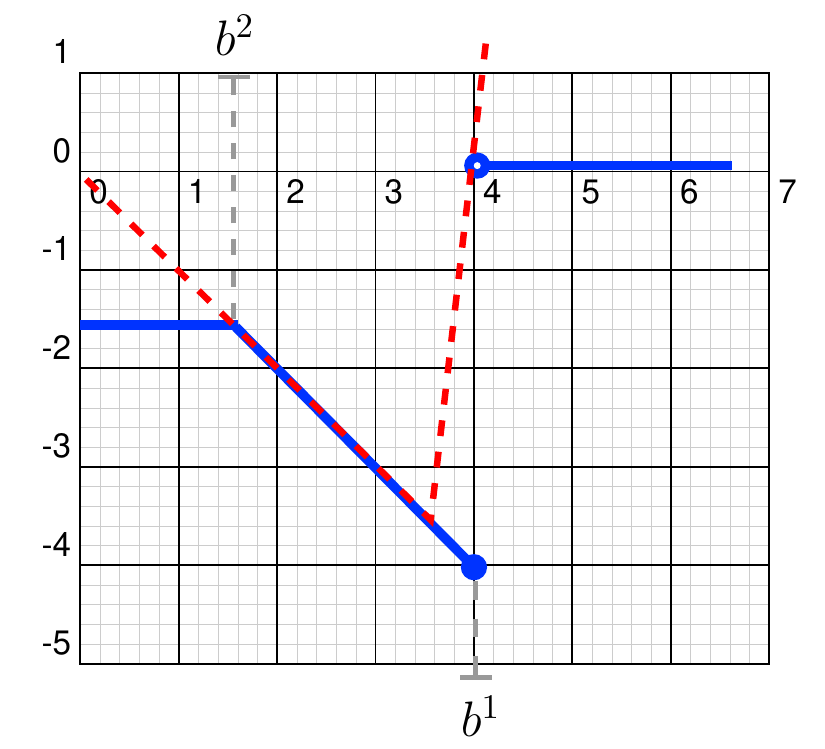}\\[-0.1cm]
(a) & (b)\\[-.25cm]
\end{tabular}
\caption{(a) Plot of the loss function $r \mapsto L(r, \b)$ for fixed
  values of $\bone$ and $\btwo$; (b) piecewise linear convex surrogate loss.}
\vspace{-.25cm}
\label{fig:realLoss}
\end{figure}

To derive generalization bounds for the learning problem formulated in
the previous section,
we need to analyze the complexity of the family of functions $L_H$
mapping $\cX \times \cB$ to $\Rset$ defined by $L_H = \set{(\x, \b)
  \mapsto L(h(\x), \b) \colon h \in H}$. The loss function $L$ is
neither Lipschitz continuous nor convex (see
Figure~\ref{fig:realLoss}). To analyze its complexity, we decompose
$L$ as a sum of two loss functions $l_1$ and $l_2$ with more
convenient properties. We have $L = l_1 + l_2$ with $l_1$ and $l_2$
defined for all $(r, \b)\in \Rset \times \cB$ by 
\begin{align*} 
l_1(r, \b) &= -\btwo \Ind_{r < \btwo} -r \Ind_{\btwo \leq r \leq \bone}
-\bone \Ind_{r > \bone} \\
l_2(r,\b) &= \bone\Ind_{r > \bone}.
\end{align*}
Note that for a fixed $\b$, the function $r \mapsto l_1(r, \b)$ is
$1$-Lipschitz since the slope of the lines defining the function is at
most $1$.  We will consider the corresponding family of loss
functions: $l_{1 H} = \set{(\x, \b) \mapsto l_1(h(\x), \b) \colon h
\in H}$ and $l_{2 H} = \set{(\x, \b) \mapsto l_2(h(\x), \b) \colon h
\in H}$ and use the notions of pseudo-dimension as well as empirical
and average Rademacher complexity. The pseudo-dimension is a standard
complexity measure \citep{Pollard84} extending the notion
of VC-dimension to real-valued functions (see also \citep{Mohribook}).
For a family of functions $G$ and finite sample $S = (z_1, \ldots,
z_m)$ of size $m$, the empirical Rademacher complexity is defined by
$\h \R_S(G) = \E_\ssigma \left[\sup_{g \in G} \frac{1}{m} \sum_{i=1}^m
\sigma_i g(z_i) \right]$, where $\ssigma = (\sigma_1, \ldots,
\sigma_m)^\top$, with $\sigma_i$s independent uniform random variables
taking values in $\set{-1, +1}$. The Rademacher complexity of $G$ is
defined as $\R_m(G) =\E_{S \sim D^m}[ \h \R_S(G)]$.

In order to bound the complexity of $L_H$ we will first bound the
complexity of the family of loss functions $l_{1H}$ and $l_{2H}$.

\begin{proposition}
\label{prop:l1}
  For any hypothesis set $H$ and any sample $S = ((\x_1, \b_1), \ldots,
  (\x_m, \b_m))$, the empirical
  Rademacher complexity of $l_{1 H}$ can be bounded as follows:
\begin{equation*}
\h \R_S(l_{1H}) \leq \h \R_S(H).
\end{equation*}

\end{proposition}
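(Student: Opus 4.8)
The plan is to bound the empirical Rademacher complexity of $l_{1H}$ by relating it to the complexity of $H$ itself. The key observation, already noted in the excerpt, is that for each fixed $\b$ the map $r \mapsto l_1(r, \b)$ is $1$-Lipschitz. This strongly suggests using a contraction (Talagrand/Ledoux-Talagrand) type argument. The natural approach is to write $l_1(h(\x_i), \b_i) = \phi_i(h(\x_i))$ where $\phi_i(r) := l_1(r, \b_i)$ is a $1$-Lipschitz function of its scalar argument, with the Lipschitz constant uniform over $i$.

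First I would recall the definition
\[
\h \R_S(l_{1H}) = \E_\ssigma \left[ \sup_{h \in H} \frac{1}{m} \sum_{i=1}^m \sigma_i\, l_1(h(\x_i), \b_i) \right]
= \E_\ssigma \left[ \sup_{h \in H} \frac{1}{m} \sum_{i=1}^m \sigma_i\, \phi_i(h(\x_i)) \right].
\]
Then I would invoke the contraction lemma for Rademacher complexity, which states that if each $\phi_i$ is $1$-Lipschitz, then the Rademacher complexity of the composed class is bounded by that of the inner class. Concretely, the lemma gives
\[
\E_\ssigma \left[ \sup_{h \in H} \frac{1}{m} \sum_{i=1}^m \sigma_i\, \phi_i(h(\x_i)) \right]
\leq \E_\ssigma \left[ \sup_{h \in H} \frac{1}{m} \sum_{i=1}^m \sigma_i\, h(\x_i) \right]
= \h \R_S(H),
\]
which is exactly the claimed bound. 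Because the sample $S$ is fixed, the quantities $h(\x_i)$ range over the projection of $H$ onto the sample points, and the contraction applies coordinatewise to this finite-dimensional set, so no measurability or continuity subtleties beyond the standard lemma arise.

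The main obstacle I anticipate is making the contraction step fully rigorous given that the functions $\phi_i$ depend on the index $i$ (through $\b_i$) rather than being a single fixed $\phi$. The basic Ledoux-Talagrand contraction principle is usually stated for a single $1$-Lipschitz function applied to all coordinates, whereas here each coordinate uses a potentially different $\phi_i$. This is handled by the sample-dependent version of the contraction lemma (sometimes attributed to a vector-contraction or the per-coordinate form), which permits a distinct $1$-Lipschitz $\phi_i$ in each summand; I would cite or reprove this version. A secondary subtlety is that $l_1$ is defined only as piecewise linear and is not differentiable at $r = \btwo$ and $r = \bone$, but since Lipschitzness (bounded slope everywhere the function is defined, with no jumps) is all the contraction lemma requires, this causes no difficulty. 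I would verify the $1$-Lipschitz claim directly: the three linear pieces of $r \mapsto l_1(r,\b)$ have slopes $0$, $-1$, and $0$ on the intervals $(-\infty, \btwo)$, $[\btwo, \bone]$, and $(\bone, \infty)$ respectively, and the function is continuous across the breakpoints, so its slope has absolute value at most $1$ everywhere, giving the uniform $1$-Lipschitz constant needed for the contraction to leave $\h\R_S(H)$ without an extra factor.
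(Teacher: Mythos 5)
Your proof is correct and follows essentially the same route as the paper: write $l_1(h(\x_i),\b_i) = \psi_i(h(\x_i))$ with each $\psi_i$ being $1$-Lipschitz and apply the per-coordinate contraction lemma, which the paper indeed states and proves in its appendix precisely in the index-dependent form (with distinct $\Psi_i$ and no absolute values) that you flag as the needed refinement of the basic Ledoux--Talagrand statement. Your direct verification of the $1$-Lipschitz property (slopes $0$, $-1$, $0$ with continuity at the breakpoints) matches the paper's brief justification that the slopes of the defining lines are at most $1$.
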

\begin{proof} 
  By definition of the empirical Rademacher complexity, we can write
\begin{equation*}
\h \R_S(l_{1H}) 
= \frac{1}{m} \E_{\ssigma} \left[\sup_{h \in H} \sum_{i =
    1}^m \sigma_i l_1(h(\x_i), \b_i) \right]
= \frac{1}{m}
  \E_{\ssigma} \left[\sup_{h \in H} \sum_{i = 1}^m \sigma_i (\psi_i \circ
  h)(\x_i) \right],
\end{equation*}
where, for all $i \in [1, m]$, $\psi_i$ is the function defined by
$\psi_i\colon r \mapsto l_1(r, \b_i)$.  For any $i \in [1, m]$,
$\psi_i$ is $1$-Lipschitz, thus, by the contraction
lemma~\ref{lemma:contraction}, we have the inequality $\h \R_S(l_{1H})
\leq \frac{1}{m} \E_{\ssigma}[\sup_{h \in H} \sum_{i = 1}^m \sigma_i
h(\x_i) ] = \h \R_S(H)$.
\end{proof}

\begin{proposition}
\label{prop:l2}
  Let $M = \sup_{\b \in \cB} \bone$. Then, for any hypothesis set $H$
  with pseudo-dimension $d = \pdim(H)$ and any sample $S = ((\x_1,
  \b_1), \ldots, (\x_m, \b_m))$, the empirical Rademacher complexity
  of $l_{2 H}$ can be bounded as follows:
\begin{equation*}
\h \R_S(l_{2H}) \leq \sqrt{\frac{2 d \log \frac{em}{d}}{m}} .
\end{equation*}
\end{proposition}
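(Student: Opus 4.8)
The plan is to reduce the expectation to a maximum over a finite set and then invoke Massart's finite-class lemma together with Sauer's lemma. The crucial structural observation is that, although $h$ ranges over the possibly infinite set $H$, the value $l_2(h(\x_i), \b_i) = \bone_i \Ind_{h(\x_i) > \bone_i}$ depends on $h$ only through the binary pattern $(\Ind_{h(\x_1) > \bone_1}, \ldots, \Ind_{h(\x_m) > \bone_m}) \in \set{0,1}^m$: once this pattern is fixed, the whole vector $(l_2(h(\x_1), \b_1), \ldots, l_2(h(\x_m), \b_m))$ is determined. Thus, writing $A \subseteq \Rset^m$ for the (finite) set of all vectors $a = (\bone_1 \Ind_{h(\x_1) > \bone_1}, \ldots, \bone_m \Ind_{h(\x_m) > \bone_m})$ obtained as $h$ runs over $H$, the supremum over $H$ becomes a maximum over $A$, and $\h\R_S(l_{2H}) = \frac{1}{m}\,\E_\ssigma\big[\max_{a \in A} \sum_{i=1}^m \sigma_i a_i\big]$.

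The next step is to bound $\lvert A\rvert$. Each element of $A$ is indexed by a distinct realizable indicator pattern, so $\lvert A\rvert$ is at most the number of patterns $(\Ind_{h(\x_i) > \bone_i})_{i=1}^m$ achievable by $h \in H$ at the points $(\x_i, \bone_i)$. By definition, $\pdim(H) = d$ is the VC-dimension of the threshold family $\set{(\x, t) \mapsto \Ind_{h(\x) > t} \colon h \in H}$; applying this with the thresholds $t_i = \bone_i$ and Sauer's lemma bounds the number of such patterns, hence $\lvert A\rvert$, by $\sum_{i=0}^d \binom{m}{i} \leq (em/d)^d$, so $\log \lvert A\rvert \leq d \log(em/d)$.

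Finally I would apply Massart's lemma to $A$. Since every coordinate $\bone_i \Ind_{h(\x_i) > \bone_i}$ lies in $\set{0, \bone_i}$ and $\bone_i \leq M$, each $a \in A$ satisfies $\norm{a}_2 \leq \sqrt{\sum_{i=1}^m (\bone_i)^2} \leq M\sqrt{m}$. Massart's lemma then yields $\h\R_S(l_{2H}) \leq \frac{M\sqrt{m}\,\sqrt{2\log\lvert A\rvert}}{m} \leq M\sqrt{\frac{2 d \log(em/d)}{m}}$, which matches the stated bound once the bids are normalized so that $M \leq 1$ (otherwise the magnitude factor $M$ multiplies the right-hand side). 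The main obstacle is the opening structural step: recognizing that the per-example weight $\bone_i$ does not enlarge the combinatorial complexity of the class, so that the count of distinct output vectors is still governed by $\pdim(H)$ alone; once that reduction is secured, the Sauer and Massart estimates are routine, and the only quantitative care is in tracking the radius $M\sqrt{m}$ of the finite set $A$.
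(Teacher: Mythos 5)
Your proof is correct, and it reaches the bound that the paper's own proof actually derives, namely $M \sqrt{\frac{2 d \log \frac{em}{d}}{m}}$; the factor $M$ is missing from the proposition as printed, but since the paper's proof also ends with that factor and Theorem~\ref{th:learningbound} uses $2M\sqrt{\frac{2d\log\frac{em}{d}}{m}}$ downstream, the omission is a typo in the statement, so your closing remark about $M$ multiplying the right-hand side is well taken rather than a defect. Your route is genuinely different in how the weights $\bone_i$ are handled. The paper writes $\sigma_i \bone_i \Ind_{h(\x_i) > \bone_i} = \sigma_i \Psi_i\big(\Ind_{h(\x_i)>\bone_i}\big)$ with $\Psi_i\colon x \mapsto \bone_i x$ an $M$-Lipschitz map, applies the contraction lemma (Lemma~\ref{lemma:contraction}) to strip the $\Psi_i$ at the cost of a uniform factor $M$, and only then invokes the Massart--Sauer bound for the $\{0,1\}$-valued class $\Ind_{h(\x)>\bone}$, whose VC dimension it bounds by $\pdim(H)$. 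You bypass contraction entirely: you observe that the restriction of $l_{2H}$ to the sample is a finite set $A$ of vectors whose cardinality is controlled by the growth function of that same indicator class (hence by $(em/d)^d$ via Sauer's lemma and the definition of the pseudo-dimension), and whose $\ell_2$-radius is at most $M\sqrt{m}$, so Massart's lemma applies directly. The combinatorial core is identical in both arguments --- identifying $\vcdim\big(\set{(\x,t)\mapsto \Ind_{h(\x)>t} \colon h \in H}\big)$ with $\pdim(H)$ and counting realizable patterns --- but your version is more self-contained (it does not need Talagrand's contraction lemma, which the paper must prove in its appendix) and is even slightly sharper as an intermediate statement, since the Massart radius $\big(\sum_{i=1}^m (\bone_i)^2\big)^{1/2}$ is data-dependent and can be well below $M\sqrt{m}$; the paper's contraction route, in exchange, is more modular, reusing a lemma it needs anyway for Proposition~\ref{prop:l1} and Theorem~\ref{th:margin}.
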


\begin{proof}
By definition of the empirical Rademacher complexity, we can write
\begin{equation*}
\h \R_S(l_{2H}) 
= \frac{1}{m} \E_{\ssigma} \Big[\sup_{h \in H} \sum_{i =
    1}^m \sigma_i \bone_i \Ind_{h(\x_i) > \bone_i} \Big] 
= \frac{1}{m}
  \E_{\ssigma} \Big[\sup_{h \in H} \sum_{i = 1}^m \sigma_i \Psi_i (\Ind_{h(\x_i) > \bone_i}) \Big],
\end{equation*}
where for all $i \in [1, m]$, $\Psi_i$ is the $M$-Lipschitz function
$x \mapsto \bone_i x$. Thus, by Lemma~\ref{lemma:contraction} combined
with Massart's lemma (see for example \cite{Mohribook}), we can write
\begin{equation*}
\h \R_S(l_{2H}) 
\leq \frac{M}{m}
  \E_{\ssigma} \Big[\sup_{h \in H} \sum_{i = 1}^m \sigma_i
  \Ind_{h(\x_i) > \bone_i} \Big] 
\leq M \sqrt{\frac{2 d' \log \frac{em}{d'}}{m}},
\end{equation*}
where $d' = \vcdim(\set{(\x, \b) \mapsto \Ind_{h(\x) - \bone > 0}
  \colon (\x, \b) \in \cX \times \cB})$.  Since the second bid
component $\btwo$ plays no role in this definition, $d'$ coincides
with $\vcdim(\set{(\x, \bone) \mapsto \Ind_{h(\x) - \bone > 0} \colon
  (\x, \bone) \in \cX \times \cB_1})$, where $\cB_1$ is the projection
of $\cB \subseteq \Rset^2$ onto its first component, and is
upper-bounded by $\vcdim(\set{(\x, t) \mapsto \Ind_{h(\x) - t > 0}
  \colon (\x, t) \in \cX \times \Rset})$, that is, the pseudo-dimension
of $H$.
\end{proof}

\begin{theorem}
\label{th:learningbound} 
Let $M = \sup_{\b \in \cB} \bone$ and let $H$ be a hypothesis set with
pseudo-dimension $d = \pdim(H)$. Then, for any $\delta > 0$, with
probability at least $1 - \delta$ over the choice of a sample $S$ of
size $m$, the following inequality holds for all $h \in H$:
\begin{equation*}
\cL(h) \leq \h \cL_S(h) + 2\R_m(H) +
2M \sqrt{\frac{2 d \log \frac{em}{d}}{m}} + M \sqrt{\frac{\log \frac{1}{\delta}}{2m}}.
\end{equation*}
\end{theorem}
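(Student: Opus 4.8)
The plan is to reduce the theorem to a standard Rademacher-complexity generalization bound applied to the loss class $L_H$, and then to control $\R_m(L_H)$ through the decomposition $L = l_1 + l_2$ together with Propositions~\ref{prop:l1} and~\ref{prop:l2}. Since the heavy lifting has already been done in those two propositions, the remaining work is essentially assembly.

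First I would invoke the standard bound. Because $L(r, \b) = -\rev(r, \b)$ satisfies $0 \leq \rev(r, \b) \leq \bone \leq M$, every function in $L_H$ takes values in $[-M, 0]$, so the class is uniformly bounded with range $M$. The standard Rademacher-complexity bound, obtained by applying McDiarmid's inequality to $\sup_{h \in H}(\cL(h) - \h\cL_S(h))$ and symmetrizing, then gives, with probability at least $1 - \delta$, for all $h \in H$,
\begin{equation*}
\cL(h) \leq \h\cL_S(h) + 2\R_m(L_H) + M\sqrt{\frac{\log \frac{1}{\delta}}{2m}}.
\end{equation*}

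Next I would bound $\R_m(L_H)$. Since each element of $L_H$ is a pointwise sum $(\x, \b) \mapsto l_1(h(\x), \b) + l_2(h(\x), \b)$ for a \emph{common} hypothesis $h$, subadditivity of the supremum yields, for any fixed sample $S$, the inequality $\h\R_S(L_H) \leq \h\R_S(l_{1H}) + \h\R_S(l_{2H})$; taking expectations over $S \sim D^m$ preserves this, so $\R_m(L_H) \leq \R_m(l_{1H}) + \R_m(l_{2H})$. I would then apply the two propositions: Proposition~\ref{prop:l1} gives $\h\R_S(l_{1H}) \leq \h\R_S(H)$ for every $S$, hence $\R_m(l_{1H}) \leq \R_m(H)$ in expectation, while Proposition~\ref{prop:l2} provides the sample-independent estimate $\h\R_S(l_{2H}) \leq M\sqrt{2d\log(em/d)/m}$, whose right-hand side depends only on $m$, $d$, and $M$, so the same bound holds for $\R_m(l_{2H})$. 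Combining these, $2\R_m(L_H) \leq 2\R_m(H) + 2M\sqrt{2d\log(em/d)/m}$, and substituting into the displayed inequality produces exactly the claimed bound.

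The only points requiring care are in the first two steps: verifying the uniform boundedness of the loss so that the concentration term carries the constant $M$, and ensuring that subadditivity is applied to a single shared $h$ across both components rather than allowing two independent suprema. Neither constitutes a genuine obstacle; the technical substance of the result was already absorbed into Propositions~\ref{prop:l1} and~\ref{prop:l2}, whose contraction-lemma and pseudo-dimension arguments supply the nontrivial content.
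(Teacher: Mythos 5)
Your proof is correct and follows essentially the same route as the paper: the decomposition $L = l_1 + l_2$, subadditivity of Rademacher complexity, Propositions~\ref{prop:l1} and~\ref{prop:l2}, and a standard Rademacher-complexity generalization bound with the $M\sqrt{\log(1/\delta)/(2m)}$ concentration term. The only difference is the order of assembly (you invoke the generalization bound first and then bound $\R_m(L_H)$, while the paper does the reverse), which is immaterial.
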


\begin{proof} 
  By a standard property of the Rademacher complexity, since $L = l_1
  + l_2$, the following inequality holds: $\R_m(L_H) \leq \R_m(l_{1H})
  + \R_m(l_{2H})$. Thus, in view of Propositions~\ref{prop:l1} and
  \ref{prop:l2}, the Rademacher complexity of $L_H$ can be bounded via
\begin{equation*}
\R_m(L_H) \leq \R_m(H) + M \sqrt{\frac{2 d \log \frac{em}{d}}{m}}.
\end{equation*}
The result then follows by the application of a standard Rademacher
complexity bound \citep{KoltchinskiiPanchenko2002}.
\end{proof}
This learning bound invites us to consider an algorithm seeking $h
\!\in\! H$ to minimize the empirical loss $\h \cL_S(h)$, while
controlling the complexity (Rademacher complexity and
pseudo-dimension) of the hypothesis set $H$.  However, as in the
familiar case of binary classification, in general, minimizing this
empirical loss is a computationally hard problem. Thus, in the next
section, we study the question of using a surrogate loss instead of
the original loss $L$.

\begin{figure}[t]
\centering
\begin{tabular}{c@{\hskip 1in}c}
\includegraphics[scale=.35]{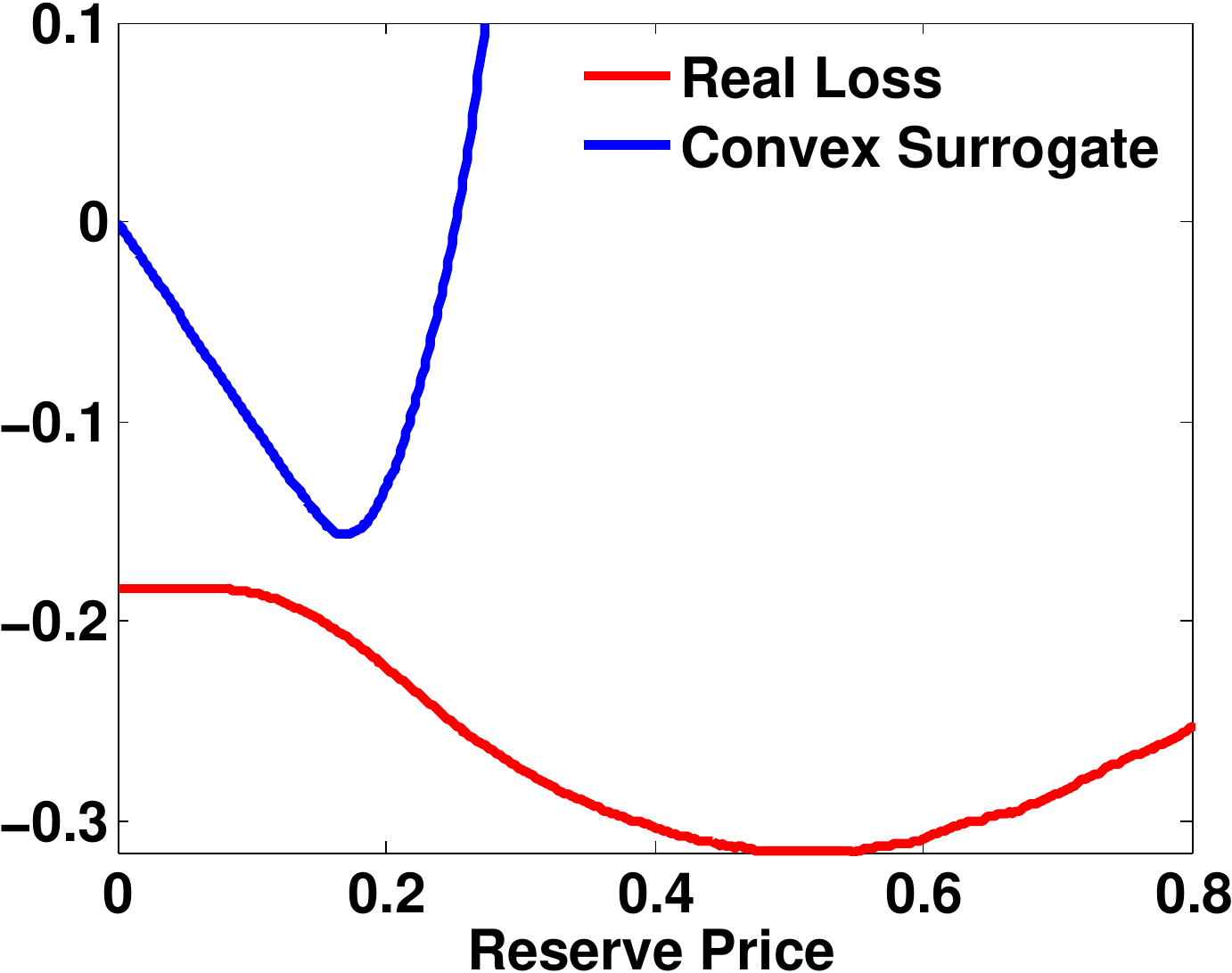}&
\includegraphics[scale=.35]{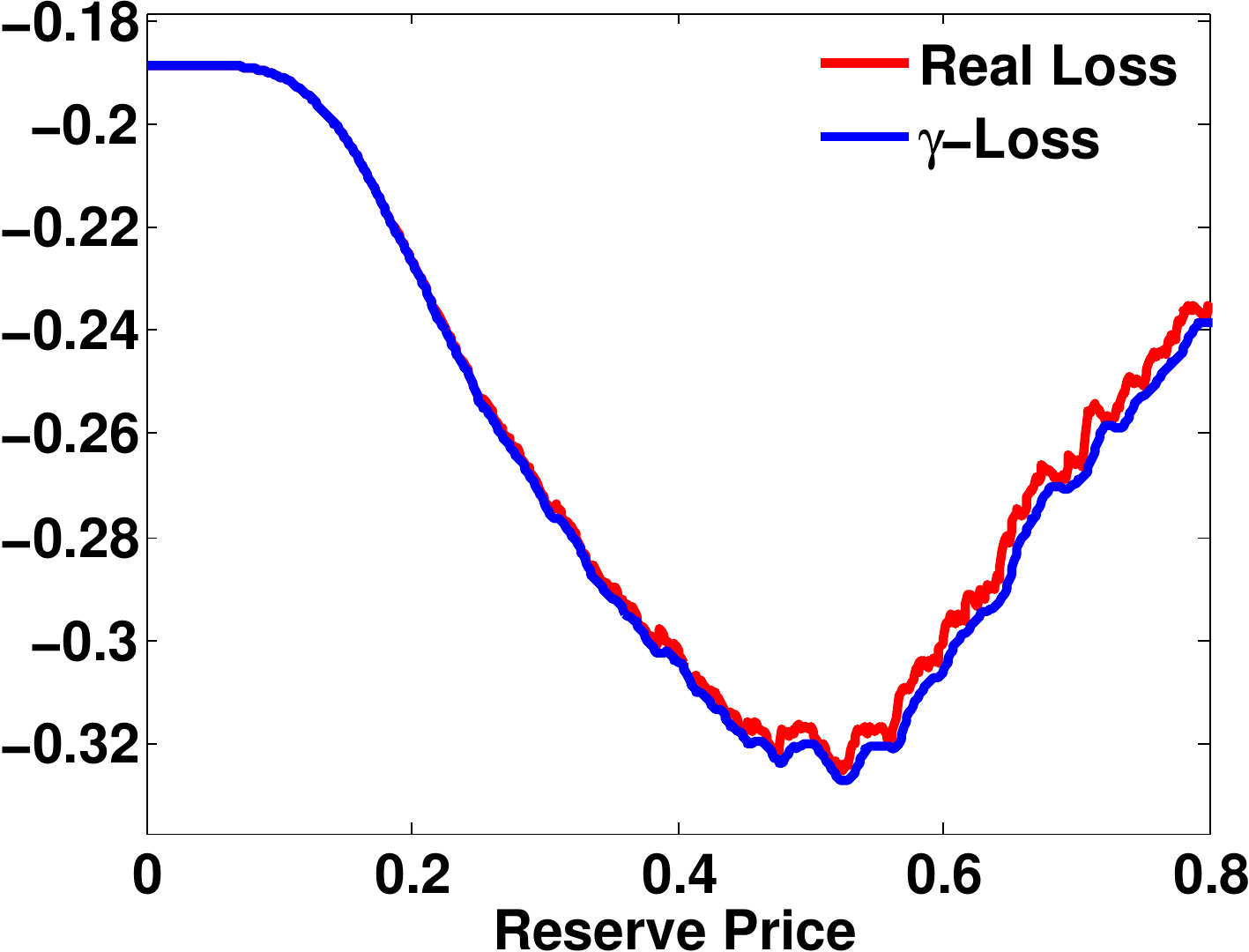}\\
(a) & (b)
\end{tabular}
\vspace{-.3cm}
\caption{Comparison of the sum of real losses $\sum_{i = 1}^m L(\cdot,
  \b_i)$ for $m = 500$ versus two different surrogates. (a) Sum of
  convex surrogate losses: the minimizer significantly differs from
  that of the sum of the original losses.  (b) The surrogate loss sum
  $\sum_{i = 1}^m L_\gamma(\cdot, \b_i)$ for $\gamma = .02$}
\label{fig:sumloss}
\vspace{-.5cm}
\end{figure}

\subsection{Surrogate loss}
\label{sec:surrogate}

As pointed out earlier, the loss function $L$ does not admit some
common useful properties: for any fixed $\b$, $L(\cdot, \b)$ is not
differentiable at two points, is not convex, and is not Lipschitz, in
fact it is discontinuous.  For any fixed $\b$, $L(\cdot, \b)$ is
quasi-convex, a property that is often desirable since there exist
several solutions for quasi-convex optimization problems. However, in
general, a sum of quasi-convex functions, such as the sum $\sum_{i =
1}^m L(\cdot, \b_i)$ appearing in the definition of the empirical
loss, is not quasi-convex and a fortiori not convex.\footnote{It is
known that under some separability condition if a finite sum of
quasi-convex functions on an open convex set is quasi-convex then all
but perhaps one of them is convex \cite{DebrueKoopmans1982}.} In fact,
in general, such a sum may admit exponentially many local minima. This
leads us to seek a surrogate loss function with more favorable
optimization properties.

A standard method in machine learning consists of replacing the loss
function $L$ with a convex upper bound \cite{Bartlett}. A natural
candidate in our case is the piecewise linear convex function shown in
Figure~\ref{fig:realLoss}(b).  However, while this convex loss
function is convenient for optimization, it is not calibrated and does
not provide a useful surrogate.  The calibration problem is
illustrated by Figure~\ref{fig:sumloss}(a) in dimension one, where the
true objective function to be minimized $\sum_{i=1}^m L(r,\b_i)$ is
compared with the sum of the surrogate losses. The next theorem shows
that this problem affects in fact any non-constant convex surrogate.
It is expressed in terms of the loss $\wt{L}\colon \Rset \times
\Rset_+ \to \Rset$ defined by $\wt{L}(r, b) = -r \Ind_{r \leq
  b}$, which coincides with $L$ when the second bid is $0$.

\begin{definition}
\label{def:consistency}
Let $M >0$, we say a function $L \colon [0,M] \times [0,M] \to \Rset$
is consistent with $\tl{L}$ if for every distribution $D$ there exists
a minimizer $\rstar \in \argmin_r \E_{b \sim D}
[L_c(r, b)]$ such that $\rstar \in \argmin_r \E_{b \sim D}[\tl{L}(r, b)]$.
\end{definition}

\begin{definition}
\label{def:weakconsistency}
We say that a sequence of functions $L_n \colon [0,M] \times [0,M]$ is
weakly consistent with $\tl{L}$ if there exists $r_n \in \argmin_r \E_{b \sim
  D}[L_n(r,b)]$ such that $r_n \rightarrow r^*$ and $r^* \in \argmin \E_{b
  \sim D}[\tl{L}(r,b)]$.
\end{definition}

\begin{proposition}[convex surrogates]
\label{prop:cvxconst}
Let $L_c \colon [0,M] \times [0,M] \to \Rset$ be a bounded
function, convex with respect to its first argument. If $L_c$ is
consistent with $\tl{L}$, then $L_c(\cdot, b)$ is constant for every
$b \in [0,M]$.
\end{proposition}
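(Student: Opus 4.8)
My plan is to exploit the quantifier "for every distribution $D$" in the definition of consistency, using only point masses and two-point distributions to over-constrain the convex function $L_c$. The starting observation is that $\E_{b\sim D}[\tl{L}(r,b)] = -r\,\Pr_{b\sim D}[b \geq r]$, so minimizing $\E[\tl{L}]$ is the same as maximizing the revenue $r \mapsto r\,\Pr[b\geq r]$, and I can read off the minimizers of $\tl{L}$ explicitly. For a point mass $D=\delta_b$ with $b>0$ this revenue is maximized uniquely at $r=b$; since consistency demands a common minimizer and $\tl{L}$ has a unique one, I first conclude that $b\in\argmin_r L_c(r,b)$ for every $b>0$. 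Writing $f_b(r):=L_c(r,b)$, each $f_b$ is then a convex function whose minimum is attained at $r=b$.

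Next I would take the two-point distribution $D=p\,\delta_{b_1}+(1-p)\delta_{b_2}$ with $0<b_1<b_2$. A direct computation shows the revenue has exactly two candidate maximizers, $b_1$ (value $b_1$) and $b_2$ (value $(1-p)b_2$), so the unique minimizer of $\E[\tl{L}]$ is $b_2$ when $p<p_0$ and $b_1$ when $p>p_0$, where $p_0=1-b_1/b_2$. Consistency therefore forces $b_2\in\argmin_r g_p$ for all $p<p_0$ and $b_1\in\argmin_r g_p$ for all $p>p_0$, where $g_p:=p f_{b_1}+(1-p)f_{b_2}$. Letting $p\to p_0^{-}$ and $p\to p_0^{+}$ and using that $g_p(r)$ is affine in $p$ for each fixed $r$, I obtain $g_{p_0}(b_1)\leq g_{p_0}(r)$ and $g_{p_0}(b_2)\leq g_{p_0}(r)$ for all $r$; hence both $b_1$ and $b_2$ minimize the convex function $g_{p_0}$, which is thus constant on the whole segment $[b_1,b_2]$.

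The key structural consequence is that $g_{p_0}=p_0 f_{b_1}+(1-p_0)f_{b_2}$ being constant on $[b_1,b_2]$ forces both $f_{b_1}$ and $f_{b_2}$ to be affine there: each is convex and, by solving the identity for it, simultaneously concave on $[b_1,b_2]$. Since the pair $(b_1,b_2)$ is arbitrary, $f_b$ is affine on every $[b,b_2]$ hence on $[b,M]$, with a single slope $\lambda_+(b)\geq 0$, and affine on every $[b_1,b]$ hence on $(0,b]$, with slope $-\lambda_-(b)\leq 0$. Matching the slope of $g_{p_0}$ to zero, namely $p_0\lambda_+(b_1)-(1-p_0)\lambda_-(b_2)=0$ with $p_0/(1-p_0)=(b_2-b_1)/b_1$, yields the exact relation $b_1\,\lambda_-(b_2)=(b_2-b_1)\,\lambda_+(b_1)$ for all $0<b_1<b_2\leq M$.

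Finally I would solve this functional relation. Fixing $b_2=t$ and letting $b_1\to t^{-}$, the right-hand side vanishes (the slopes $\lambda_+$ stay bounded since $L_c$ is bounded and the affine pieces live on intervals of length bounded away from $0$), while the left-hand side tends to $t\,\lambda_-(t)$, forcing $\lambda_-(t)=0$ for every $t\in(0,M]$; feeding this back gives $\lambda_+(b_1)=0$ for every $b_1$. Hence each $f_b$ has vanishing one-sided slopes and is constant, and the endpoint case $b=0$ follows from the same two-point argument with $b_1=0$, where the $\tl{L}$-minimizer is $b_2$ for every $p<1$, forcing every point of $(0,M]$ to minimize $f_0$. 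I expect the main obstacle to be the convex-analytic bookkeeping in this non-smooth setting: carefully justifying, via one-sided derivatives and the limits $p\to p_0^{\pm}$, that the minimizer of $g_p$ switches at exactly $p_0$ and extracting the \emph{exact} slope identity rather than mere inequalities; once that identity is in hand, solving the functional relation is routine.
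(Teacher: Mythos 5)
Your proposal is correct and takes essentially the same route as the paper's proof: both rest on the two-point distributions with threshold probability $p_0 = (b_2-b_1)/b_2$, both extract at that threshold the very same slope identity (your $b_1\lambda_-(b_2) = (b_2-b_1)\lambda_+(b_1)$ is the paper's $(b_2-b_1)D^-_r L_c(b_2,b_1) = -b_1 D^-_r L_c(b_2,b_2)$), and both kill the slopes via the same boundedness argument in the limit $b_1 \to b_2$. Your packaging --- point masses to pin the minimizer of $L_c(\cdot,b)$ at $b$, then the fact that a convex function with two minimizers is constant (hence affine pieces) --- is a cleaner, more geometric rendering of the paper's one-sided-derivative bookkeeping, and you additionally treat the endpoint case $b=0$, but the underlying argument is the same.
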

\begin{proof}
Let $0 < b_1 < b_2 < M$, for every $\mu \in [0,1]$ define $D_\mu$ to be the
probability distribution supported on $\{b_1, b_2\}$ with $D_{\mu}(b_1) =
\mu$. Denote by $\E_\mu$ the expectation with respect to this
distribution. A straightforward calculation shows that the unique minimizer of
$E_\mu(\tl{L}(r, b)$ is given by $b_1$ if $\mu > \frac{b_2- b_1}{b_2}$
and by $b_2$ otherwise. Therefore, if $F_\mu(r) = \E_\mu(L_c(r, b))$,
it must be the case that $b_1$ is a minimizer of $F_\mu$ for $\mu <
\frac{b_2 - b_1}{b_2}$ and $b_2$ is a minimizer of $F_\mu$ for $\mu >
\frac{b_2 - b_1}{b_2}$.
Let $D^+_r$ and $D^-_r$ denote the left and right derivatives with
respect to $r$. Since $F_\mu$ is a convex function, its right and left
derivatives are well defined and we must have
\begin{align}
  &0 \geq D^-_r F_\mu(b_2) = \mu D^-_r L_c(b_2, b_1) + (1
  - \mu) D^-_r L_c(b_2, b_2)  & \text{for } \mu > \frac{b_2 -
    b_1}{b_2}, \label{eq:leftderivative} \\
  &0 \leq D^+_r F_\mu(b_1) \leq D^-_r F_\mu(b_2) &
  \text{for } \mu < \frac{b_2 - b_1}{b_2}. \label{eq:rightderivative}
\end{align}
Where the second inequality in \eqref{eq:rightderivative} holds by
convexity of $F_\mu$ and the fact that $b_2 > b_1$. By setting $\mu =
\frac{b_2 - b_1}{b_2}$, it follows from inequalities
\eqref{eq:leftderivative} and \eqref{eq:rightderivative} that
$D^-_rF_\mu(b_2) = 0$. Rearranging terms and replacing the value
of $\mu$ we arrive to the equivalent condition
\begin{equation*}
  (b_2 - b_1)D^-_rL_c(b_2, b_1) = -b_1 D^-_rL_c(b_2, b_2).
\end{equation*}
Since $L_c$ is a bounded function it follows that
$D^-_rL_c(b_2, b_1)$ is bounded for every $b_1, b_2 \in
(0.M)$, therefore as $b_1 \rightarrow b_2$ we must have $b_2
D^-_r L_c(b_2, b_2) = 0$ which implies $D^-_r
L_c(b_2, b_2) = 0$ for all $b_2 > 0$. In view of this, inequality
\eqref{eq:leftderivative} can only be satisfied if 
$D^-_rL_c(b_2, b_1) \leq 0$. However, the convexity of $L_c$
implies $D^-_r L_c(b_2, b_1) \geq D^-_r(b_1, b_1) =
0$. Therefore, $D^-_r L_c(b_2, b_1) = 0$ must hold for all
$b_2 > b_1 > 0$. Similarly, by definition of $F_\mu$, the first inequality in
\eqref{eq:rightderivative} implies
\begin{equation}
\label{eq:rightderbound}
\mu D^+_rL_c(b_1, b_1) +
D^+_r L_c(b_1, b_2) \geq 0.
\end{equation}
 Nevertheless, for every $b_2 > b_1$ we have $0 = D^-_r L_c(b_1,b_1)
\leq D^+_r L_c(b_1,b_1) \leq D^-_r L_c(b_2, b_1) = 0$. Consequently,
$D^+_rL_c(b_1, b_1) = 0$ for all $b_1 > 0$. Furthermore,
$D^+_rL_c(b_1, b_2) \leq D^+_r L_c(b_2, b_2) = 0$; thus, in order for
\eqref{eq:rightderbound} to be satisfied we must have $D^+_r L_c(b_1,
b_2) = 0$ for all $b_1 < b_2$.

Thus far, we have shown that for every $b > 0$, if $r\geq b$, then
$D^-_rL_c(r, b) = 0$, whereas $D^+_rL_c(r,b) = 0$ for $r \leq b$. A
simple convexity argument shows that $L_c(\cdot, b)$ is in fact
differentiable and $\frac{d}{dr} L_c(r, b) = 0$ for all $r \in
(0,M)$. Therefore it must be that $L_c(\cdot, b)$ is a constant
function.
\end{proof}
The result of the previous Proposition can be considerably strengthen as
the following Theorem shows.

\begin{theorem} Let $M > 0$ and
  let $L_n :[0, M] \times [0,M] \to \Rset$ be a sequence of
  functions convex and differentiable with respect to its first
  coordinate satisfying
  \begin{itemize}
  \item $\sup_{b \in [0,M], n \in \mathbb{N}} \max(|\frac{d}{d_r}L_n(0,b)|,
    |\frac{d}{dr}L_n(M, b))| = K < \infty$
  \item $L_n$ is weakly consistent with $\tl{L}$.
  \item $L_n(0, b) = 0$ for all $b$.
  \end{itemize}
If the sequence $L_n$ converges pointwise to a function $L_c$ then
$L_n(\cdot , b)$ converges uniformly to $L_c(\cdot, b) \equiv 0$.
\end{theorem}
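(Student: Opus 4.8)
The plan is to reduce the statement to Proposition~\ref{prop:cvxconst}: I will show that the pointwise limit $L_c$ is a bounded convex function that is consistent with $\tl{L}$ in the sense of Definition~\ref{def:consistency}, and I will extract the uniform convergence from an equi-Lipschitz estimate on the sequence. First I record two structural facts about the limit. Since each $L_n(\cdot, b)$ is convex and $L_c(\cdot, b)$ is its pointwise limit, $L_c(\cdot, b)$ is convex, and $L_c(0,b) = \lim_n L_n(0,b) = 0$ by the third hypothesis. The key quantitative ingredient is that the family $\{L_n(\cdot, b)\}_{n,b}$ is uniformly Lipschitz: because $L_n(\cdot, b)$ is convex and differentiable, $\frac{d}{dr}L_n(r,b)$ is nondecreasing in $r$, hence lies between $\frac{d}{dr}L_n(0,b)$ and $\frac{d}{dr}L_n(M,b)$, both of which are bounded in absolute value by $K$ by the first hypothesis. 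Thus $|\frac{d}{dr}L_n(r,b)| \le K$ on $[0,M]$, and combined with $L_n(0,b)=0$ this yields $|L_n(r,b)| \le KM$; so the functions $L_n(\cdot,b)$ are uniformly bounded and $K$-Lipschitz, and these bounds pass to $L_c(\cdot, b)$.

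Next I would upgrade pointwise to uniform convergence. On the compact interval $[0,M]$, a pointwise-convergent sequence of $K$-Lipschitz functions converges uniformly: one covers $[0,M]$ by finitely many points spaced $\e/K$ apart, controls the gaps via the common Lipschitz constant, and applies pointwise convergence at the grid points (equivalently, invokes Arzel\`a--Ascoli). This already gives the uniform convergence claim; it remains to identify the limit as $0$.

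The heart of the argument, and the step I expect to be the main obstacle, is converting the \emph{weak} consistency of the sequence $(L_n)$ (Definition~\ref{def:weakconsistency}) into consistency of the single limit $L_c$. Fix a distribution $D$ and set $F_n(r) = \E_{b \sim D}[L_n(r,b)]$ and $F_c(r) = \E_{b \sim D}[L_c(r,b)]$. Since the integrands are bounded by $KM$, dominated convergence gives $F_n \to F_c$ pointwise, and since each $F_n$ is again $K$-Lipschitz, the convergence is in fact uniform on $[0,M]$ by the previous paragraph. By weak consistency there is a sequence $r_n \in \argmin_r F_n$ with $r_n \to \rstar$ and $\rstar \in \argmin_r \E_{b \sim D}[\tl{L}(r,b)]$. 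For any fixed $r$ we have $F_n(r_n) \le F_n(r)$; letting $n \to \infty$ and combining uniform convergence with $r_n \to \rstar$ gives $F_n(r_n) \to F_c(\rstar)$ and $F_n(r) \to F_c(r)$, hence $F_c(\rstar) \le F_c(r)$. Therefore $\rstar$ minimizes $F_c$ while simultaneously minimizing $\E_{b \sim D}[\tl{L}(r,b)]$, so $L_c$ is consistent with $\tl{L}$.

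Finally, $L_c$ is bounded and convex in its first argument, so Proposition~\ref{prop:cvxconst} applies and shows that $L_c(\cdot, b)$ is constant for every $b$; since $L_c(0,b) = 0$, that constant is $0$, i.e. $L_c(\cdot, b) \equiv 0$. Together with the uniform convergence established above, this proves that $L_n(\cdot, b)$ converges uniformly to $L_c(\cdot, b) \equiv 0$.
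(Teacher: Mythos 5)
Your proposal is correct and follows essentially the same route as the paper's own proof: uniform boundedness and a $K$-Lipschitz (equicontinuity) estimate from the derivative bounds, upgrading pointwise to uniform convergence on $[0,M]$, passing to the expected losses $F_n \to F_c$ to convert weak consistency of the sequence into consistency of the limit $L_c$, and finally invoking Proposition~\ref{prop:cvxconst} together with $L_c(0,b)=0$ to conclude $L_c(\cdot,b)\equiv 0$. The only differences are cosmetic: you use a direct grid argument in place of the paper's appeal to Arzel\`a--Ascoli, and you spell out the limit interchange $F_n(r_n)\to F_c(\rstar)$ more explicitly than the paper does.
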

\begin{proof}
We first show that the functions $L_n$ are uniformly bounded for
every $b$. 
\begin{align*}
 |L_n(r, b)| &= \Big| \int_0^r \frac{d}{dr}L_n(r, b)dr \Big|  \leq \int_0^M
 \max\left(\Big| \frac{d}{dr}L_n(0,b) \Big| , \Big| \frac{d}{dr}L_n(M,
   b)\Big| \right) dr \\
& \leq \int_0^M K dr = MK.
\end{align*}
Where the first inequality holds since, by convexity, the derivative
of $L_n$ with respect to $r$ is an increasing function. Let us show
that the sequence $L_n$ is also equicontinuous. It will follow then by
the theorem of Arzela-Ascoli that $L_n(\cdot, b)$ converges uniformly
to $L_c(\cdot, b)$. Let $r_1, r_2 \in [0,M]$, for every $b \in [0,M]$
we have
\begin{align*}
  |L_n(r_1, b) - L_n(r_2, b)|  &\leq \sup_{r \in
    [0,M]}\left|\frac{d}{dr}L_n(r, b)\right| |r_1 - r_2|  \\
  & = \max \left(\left|\frac{d}{d_r}L_n(0,b)\right|,\left|\frac{d}{dr}L_n(M, b))\right| \right)| r_1 - r_2|
  \\
  & \leq K | r_1 - r_2|. 
\end{align*}
Where again we have used the convexity of $L_n$ to derive the first
equality. Let $F_n(r) = \E_{b \sim D} [L_n(r, b)]$ and $F(r) = \E_{b
  \sim D} [L_c(r, b)]$. It is immediate that $F_n$ is a convex function
and by invoking Arzela-Ascoli's theorem again we can show that the
sequence $F_n$ has a subsequence which is uniformly
convergent. Furthermore by the dominated convergence theorem we have
$F_n(r)$ converges pointwise to $F(r)$. Therefore, the uniform limit
of $F_n$ must be $F$. This implies that 
\begin{equation*}
  \min_{r \in [0,M]} F(r) = \lim_{n \rightarrow \infty} \min_{r \in
    [0,M]} F_n(r) = \lim_{n \rightarrow \infty} F_n(r_n) = F(r^*),
\end{equation*}
which is precisely the definition of consistency with
$\tl{L}$. Furthermore, the function $L_c(\cdot, b)$ is convex since it
is the uniform limit of convex functions. It then follows by
Theorem~\ref{prop:cvxconst} it follows that $L_c(\cdot, b) \equiv
L_c(0,b) = 0$.
\end{proof}
The above theorems show that even a weakly consistent sequence of
convex losses is uniformly close to a constant function and is therefore
uninformative for our task. This leads us to consider alternative
non-convex loss functions. Perhaps, the most natural surrogate loss is
then $L'_\gamma$, an upper bound on $L$ defined for all $\gamma > 0$
by:
\begin{multline*}
L'_\gamma(r, \b) = -\btwo \Ind_{r \leq \btwo} -r \Ind_{\btwo < r \leq
  \big((1 - \gamma)\bone\big) \vee \btwo} \\
+ \Big(\frac{1 - \gamma}{\gamma} \vee \frac{\btwo}{\bone - \btwo}\Big)(r - \bone )
\Ind_{\big((1 - \gamma) \bone\big) \vee \btwo
 < r \leq \bone} ,
\end{multline*}
where $c \vee d = \max(c, d)$. The plot of this function is shown in
Figure~\ref{fig:rho_loss}(a). The $\max$ terms ensure that the
function is well defined if $(1 - \gamma) \bone < \btwo$. However, this
turns out to be also a poor choice because $L'_\gamma$ is a loose
upper bound of $L$ in the most critical region, that is around the
minimum of the loss $L$. \ignore{ This furthermore results in a loss that in
general is not calibrated, that is, for a hypothesis set $H$, we may
not have in general $\inf_{h \in H} \E_{(\x, \b) \sim
D}[L'_\gamma(h(\x), \b)] \to \inf_{h \in H} \E_{(\x, \b) \sim
D}[L(h(\x), \b)]$ as $\gamma \to 0$, depending on the distribution
$D$.\footnote{Technically, this is related to the fact that the
difference $\E[L'_\gamma(h(\x), \b) - L(h(\x), \b)]$ can be expressed
and bounded in terms of $\Pr\big[h(\x) \in ((1 - \gamma) \bone,
\bone]\big]$ and to the closeness of the interval $((1 - \gamma)
\bone, \bone]$ at $\bone$.} } Thus, instead, we will consider, for any
$\gamma > 0$, the loss function $L_\gamma$ defined as follows:
\begin{equation}
L_\gamma(r, \b) = -\btwo \Ind_{r \leq \btwo} -r \Ind_{\btwo < r \leq
  \bone} + 
\frac{1}{\gamma}(r - (1 + \gamma) \bone ) \Ind_{\bone < r \leq
 (1 + \gamma)\bone},
\end{equation}
and shown in Figure~\ref{fig:rho_loss}(b).\footnote{Technically, the
  theoretical and algorithmic results we present for $L_\gamma$ could
be developed in a somewhat similar way for $L'_\gamma$.} A comparison
between the sum of $L$-losses and the sum of $L_\gamma$-losses is
shown in Figure~\ref{fig:sumloss}(b). Observe that the fit is
considerably better than when using a piecewise linear convex
surrogate loss.  A possible concern associated with the loss function
$L_\gamma$ is that it is a lower bound for $L$. One might think then
that minimizing it would not lead to an informative solution. However,
we argue that this problem arises significantly with upper bounding
losses such as the convex surrogate, which we showed not to lead to a
useful minimizer, or $L'_\gamma$, which is a poor approximation of $L$
near its minimum.  By matching the original loss $L$ in the region of
interest, around the minimal value, the loss function $L_\gamma$ leads
to more informative solutions in this problem. We further analyze the
difference of expectations of $L$ and $L_\gamma$ and show that
$L_\gamma$ is calibrated. Since $L_\gamma \rightarrow L$ as $\gamma
\rightarrow 0$, this result may seem trivial. However this convergence
is not uniform and therefore calibration is not guaranteed.  We will
use for any $h \in H$, the notation $\cL_\gamma(h) := \E_{(\x, \b)
\sim D}[L_\gamma(h(\x), \b)]$.

\begin{figure}[t]
\centering
\begin{tabular}{c@{\hspace{.5cm}}c} 
\includegraphics[scale=.48]{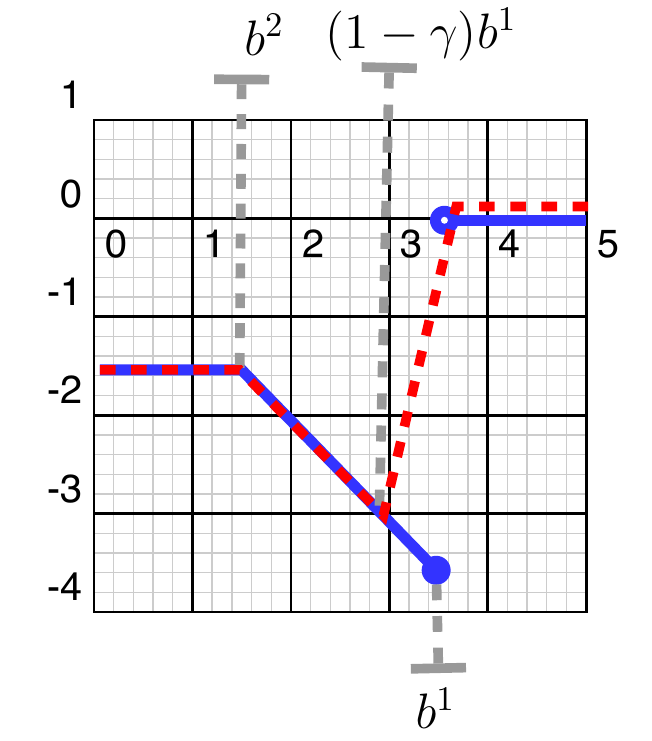} &
\includegraphics[scale=.48]{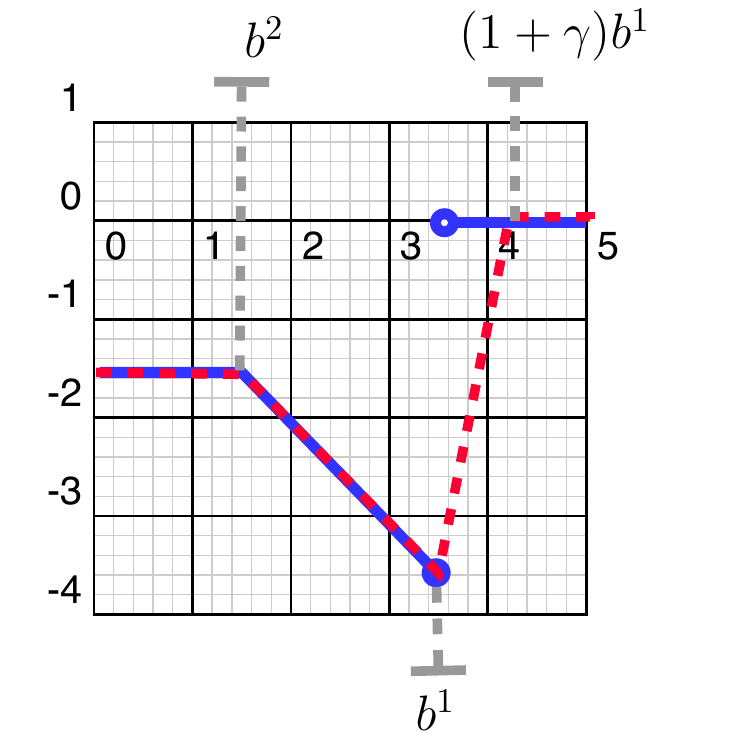}\\
(a) & (b)
\end{tabular}
\caption{Comparison of the true loss $L$ with (a) the surrogate loss
  $L'_\gamma$; (b) the surrogate loss $L_\gamma$, for $\gamma = 0.1$.}
\vspace{-.5cm}
\label{fig:rho_loss}
\end{figure} 
 
\begin{theorem}
\label{th:calibration}
 Let $H$ be a closed, convex subset of a linear space of functions containing
$0$. Denote by $h^*_\gamma$ the solution of $\min_{h \in
H}\cL_\gamma(h)$. If $\sup_{\b \in \mathcal B} \bone = M < \infty$,
then
\begin{equation*}
\cL(h^*_\gamma) - \cL_\gamma(h^*_\gamma) \leq \gamma M .
\end{equation*}
\end{theorem}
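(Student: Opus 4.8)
The plan is to reduce the statement to a pointwise comparison of the two losses and then exploit a first-order optimality condition for $h^*_\gamma$ along the radial direction $\lambda \mapsto \lambda h^*_\gamma$.

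First I would compute the pointwise gap. A case analysis over the regions $r\le \btwo$, $\btwo < r \le \bone$, $\bone < r \le (1+\gamma)\bone$, and $r > (1+\gamma)\bone$ shows that $L$ and $L_\gamma$ coincide everywhere except on the third interval, where
\[
L(r,\b) - L_\gamma(r,\b) = \tfrac{1}{\gamma}\big((1+\gamma)\bone - r\big)\,\Ind_{\bone < r \le (1+\gamma)\bone} \ge 0 .
\]
Writing $E$ for the event $\{\bone < h^*_\gamma(\x) \le (1+\gamma)\bone\}$, this gives $\cL(h^*_\gamma) - \cL_\gamma(h^*_\gamma) = \tfrac{1}{\gamma}\E\big[((1+\gamma)\bone - h^*_\gamma(\x))\,\Ind_{E}\big]$. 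On $E$ the integrand is at most $\bone\le M$, so the crude estimate $M\,\Pr[E]$ is immediate but only yields $M$; the point of the theorem is thus to use the minimality of $h^*_\gamma$ to control the mass that $h^*_\gamma$ places in $E$.

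The central step uses the geometry of $H$: since $0\in H$ and $H$ is convex, $\lambda h^*_\gamma \in H$ for every $\lambda \in [0,1]$. Set $g(\lambda) = \cL_\gamma(\lambda h^*_\gamma)$; as $h^*_\gamma$ minimizes $\cL_\gamma$ over $H$, the scalar map $g$ attains its minimum over $[0,1]$ at the right endpoint $\lambda=1$, so $g'(1^-)\le 0$. Differentiating under the expectation (justified by dominated convergence, the piecewise-linear slopes of $L_\gamma(\cdot,\b)$ being bounded by $1/\gamma$ and $h^*_\gamma$ integrable) and reading off the left slope of $L_\gamma(\cdot,\b)$ on each piece — namely $-1$ on $(\btwo,\bone]$ and $1/\gamma$ on $(\bone,(1+\gamma)\bone]$ — gives $g'(1^-) = A - B$, where
\[
A = \tfrac{1}{\gamma}\,\E\big[h^*_\gamma(\x)\,\Ind_{E}\big], \qquad B = \E\big[h^*_\gamma(\x)\,\Ind_{\btwo < h^*_\gamma(\x) \le \bone}\big] .
\]
Optimality therefore forces $A \le B$.

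The remainder is algebra. Expanding and using $A = \tfrac{1}{\gamma}\E[h^*_\gamma(\x)\Ind_E]$ I get $\cL(h^*_\gamma) - \cL_\gamma(h^*_\gamma) = \tfrac{1+\gamma}{\gamma}\E[\bone\Ind_E] - A$; since $\bone < h^*_\gamma(\x)$ on $E$, we have $\E[\bone\Ind_E] \le \E[h^*_\gamma(\x)\Ind_E] = \gamma A$, whence $\cL(h^*_\gamma) - \cL_\gamma(h^*_\gamma) \le (1+\gamma)A - A = \gamma A$. Finally $A \le B \le \E[\bone] \le M$ (using $h^*_\gamma(\x)\le \bone$ on the defining event of $B$), giving $\cL(h^*_\gamma) - \cL_\gamma(h^*_\gamma) \le \gamma A \le \gamma M$. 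I expect the main obstacle to be the middle step: recognizing that the only perturbation guaranteed to remain in $H$ from convexity and $0\in H$ is the radial one $\lambda h^*_\gamma$, and then extracting $A\le B$ from the one-sided optimality condition with the measure-theoretic care needed to differentiate $g$ through the kinks of $L_\gamma$. The short identity $\E[\bone\Ind_E]\le\gamma A$ that converts the loss gap into $\gamma A$ is the nonobvious observation that makes the clean $\gamma M$ bound appear.
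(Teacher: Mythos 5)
Your proof is correct and is essentially the paper's own argument: your optimality condition $A \le B$, obtained by differentiating $\lambda \mapsto \cL_\gamma(\lambda h^*_\gamma)$ at $\lambda = 1^-$, is exactly Lemma~\ref{lemma:wstarprop}, which the paper proves via the same radial perturbation $\lambda h^*_\gamma$ (using convexity of $H$ and $0 \in H$) followed by the limit $\lambda \to 1^-$ under bounded convergence. The remaining steps --- the observation that the gap $\cL(h^*_\gamma) - \cL_\gamma(h^*_\gamma)$ is supported on $I_3$ and equals $\frac{1}{\gamma}\E\big[((1+\gamma)\bone - h^*_\gamma(\x))\Ind_{I_3}(\x)\big]$, and the closing algebra using $\bone < h^*_\gamma(\x)$ on $I_3$ and $h^*_\gamma(\x) \le \bone \le M$ on $I_2$ --- likewise coincide with the paper's proof.
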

The following sets, which will be used in our proof, form a partition
of $\mathcal X \times \mathcal B$
\begin{align*}
 I_1 &= \{(\x, \b) | h^*_\gamma(\x) \leq \btwo\}  &
 I_2 &= \{(\x, \b) | h^*_\gamma(\x) \in (\btwo, \bone] \} \\
 I_3 &= \{(\x, \b) | h^*_\gamma(\x) \in (\bone, (1 + \gamma) \bone] \} &
 I_4 &= \{(\x, \b) | h^*_\gamma(\x) > (1 + \gamma) \bone \}
\end{align*}
This sets represent the different regions where $L_\gamma$ is
defined. In each region the function is affine.  We will now prove a
technical lemma that will help us in the proof of
Theorem~\ref{th:calibration}. 
\begin{lemma}
\label{lemma:wstarprop}
Under the conditions of Theorem~\ref{th:calibration},
\begin{equation*}
  \E_{\x,\b} \Big[h^*_\gamma(\x) \Ind_{I_2}(\x)\Big] \geq 
\frac{1}{\gamma} \E_{\x, \b} \Big[ h^*_\gamma(\x) \Ind_{I_3}(\x)\Big].
\end{equation*}
\end{lemma}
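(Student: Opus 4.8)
The plan is to exploit the first-order optimality of $h^*_\gamma$ through an admissible one-sided perturbation toward $0$. Since $H$ is convex and contains $0$, the function $(1-t)h^*_\gamma = (1-t)h^*_\gamma + t\cdot 0$ lies in $H$ for every $t \in [0,1]$, so it is a legitimate competitor. Setting $\phi(t) = \cL_\gamma\big((1-t)h^*_\gamma\big)$, optimality of $h^*_\gamma$ gives $\phi(t) \geq \phi(0)$ for all $t \in [0,1]$, and therefore the right derivative satisfies $\phi'(0^+) \geq 0$. The whole strategy is to show that this single inequality, once the derivative is computed, is exactly the claimed bound.

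First I would record the shape of $L_\gamma(\cdot, \b)$. It is continuous and piecewise affine, with slope $0$ on $I_1$ (where $r \leq \btwo$), slope $-1$ on $I_2$ (where $\btwo < r \leq \bone$), slope $1/\gamma$ on $I_3$ (where $\bone < r \leq (1+\gamma)\bone$), and slope $0$ on $I_4$ (where $r > (1+\gamma)\bone$); continuity at the three breakpoints $\btwo$, $\bone$, $(1+\gamma)\bone$ is a direct check. In particular its left derivative $D^-_r L_\gamma(r,\b)$ equals $0, -1, 1/\gamma, 0$ on $I_1, I_2, I_3, I_4$ respectively, where the half-open conventions in the definitions of $I_2$ and $I_3$ are chosen precisely so that each breakpoint inherits the left slope.

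Next I would differentiate $\phi$ under the expectation sign. For a fixed $(\x,\b)$ with $r = h^*_\gamma(\x) \geq 0$, the map $t \mapsto L_\gamma((1-t)r, \b)$ has right derivative at $t=0$ equal to $-r\,D^-_r L_\gamma(r,\b)$, since decreasing the argument approaches $r$ from the left. Evaluating on the four regions yields the pointwise right derivative $h^*_\gamma(\x)\Ind_{I_2} - \tfrac{1}{\gamma} h^*_\gamma(\x)\Ind_{I_3}$. To pass the derivative through $\E$, I would invoke dominated convergence: $L_\gamma(\cdot,\b)$ is $\max(1,1/\gamma)$-Lipschitz, so the difference quotient is bounded by $\max(1,1/\gamma)\,h^*_\gamma(\x)$ and, moreover, it vanishes unless the segment $[(1-t)r, r]$ meets the region $[\btwo, (1+\gamma)\bone]$ on which $L_\gamma$ is non-constant; for $t \leq 1/2$ this forces $r \leq 2(1+\gamma)M$, giving a constant integrable majorant. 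Hence $\phi'(0^+) = \E\big[h^*_\gamma(\x)\Ind_{I_2}\big] - \tfrac{1}{\gamma}\E\big[h^*_\gamma(\x)\Ind_{I_3}\big]$, and combining with $\phi'(0^+)\geq 0$ gives the lemma after rearranging.

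The main obstacle is exactly this interchange of derivative and expectation: one must argue that the difference quotients are dominated by an integrable function uniformly in small $t$, which is where the boundedness $\bone \leq M$ together with the localization of the non-constant part of $L_\gamma$ become essential, since a priori $h^*_\gamma$ itself need not be bounded. The boundary bookkeeping — making sure the breakpoints contribute with the left slope and do not produce spurious terms — is routine but must be done carefully, and the half-open definitions of $I_2$ and $I_3$ are precisely what keep it consistent.
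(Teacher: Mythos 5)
Your proof is correct and is essentially the paper's own argument: the paper compares $h^*_\gamma$ with $\lambda h^*_\gamma$ (your $\lambda = 1-t$), discards the contributions of $I_1$ and $I_4$ by a sign argument, bounds the resulting difference quotients on $I_2$ and $I_3$, divides by $(1-\lambda)$, and lets $\lambda \to 1$ inside the expectation --- which is exactly your computation of $\phi'(0^+) \geq 0$, carried out with explicit inequalities and bounded convergence instead of a stated dominated-convergence majorant. Your localization argument for the integrable majorant and the left-derivative bookkeeping at the breakpoints amount to a cleaner packaging of the paper's case analysis on whether $\lambda h^*_\gamma(\x) \leq \bone$ and its appeal to the continuity of probability measures, so the two proofs differ only in organization, not in substance.
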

\begin{proof}
Let $0 < \lambda < 1$. Since $H$ is a convex set, it follows that
$\lambda h^*_\gamma \in H$; and from the definition of $h^*_\gamma$ we
must have:
\begin{equation}
  \label{eq:minprop}
  \E_{\x,\b} \Big[L_\gamma (h^*_\gamma(\x), \b)\Big] \leq 
\E_{\x, \b} \Big[L_\gamma(\lambda h^*_\gamma(\x), \b)\Big].
\end{equation}
If $h^*_\gamma(\x) < 0$, then $L_\gamma(h^*_\gamma(\x), \b) =
L_\gamma(\lambda h^*_\gamma(\x)) = -\btwo$ by definition of $L_\gamma$. If on the
other hand $h^*_\gamma(\x) > 0$, since $\lambda h^*_\gamma(\x) <
h^*_\gamma(\x)$ we must have that for $(\x, \b) \in I_1\;
L_\gamma(h^*_\gamma(\x), \b) = L_\gamma(\lambda h^*_\gamma(\x), \b) =
- \btwo$ too. Moreover, from the fact that $L_\gamma \leq 0$ and
$L_\gamma(h^*_\gamma(\x), \b) = 0 $ for $(\x, \b) \in I_4$ it follows
that $L_\gamma(h^*_\gamma(\x), \b) \geq L_\gamma(\lambda
h^*_\gamma(\x), \b)$ for $(\x, \b) \in I_4$, and therefore the
following inequality trivially holds:
\begin{equation}
  \label{eq:i1i4}
  \E_{\x,\b} \Big[L_\gamma(h^*_\gamma(\x), \b) (\Ind_{I_1}(\x) + \Ind_{I_4}(\x))\Big] 
\geq \E_{\x,\b} \Big[L_\gamma(\lambda h^*_\gamma(\x), \b) (\Ind_{I_1}(\x) + \Ind_{I_4}(\x))\Big].
\end{equation}
Subtracting \eqref{eq:i1i4} from \eqref{eq:minprop} we obtain
\begin{equation*}
    \E_{\x,\b} \Big[L_\gamma(h^*_\gamma(\x), \b) (\Ind_{I_2}(\x) + \Ind_{I_3}(\x))\Big] 
\leq  \E_{\x,\b} \Big[L_\gamma(\lambda h^*_\gamma(\x), \b) (\Ind_{I_2}(\x) + \Ind_{I_3}(\x))\Big].
\end{equation*}
By rearranging terms we can see this inequality is equivalent to
\begin{equation}
  \label{eq:i2i3}
  \E_{\x,\b} \Big[ (L_\gamma(\lambda h^*_\gamma(\x), \b) - L_\gamma( h^*_\gamma(\x), \b)) \Ind_{I_2}(\x)\Big]
  \geq   \E_{\x,\b} \Big[ (L_\gamma(h^*_\gamma(\x), \b) -
  L_\gamma(\lambda h^*_\gamma(\x), \b)) \Ind_{I_3}(\x)\Big]
\end{equation}
Notice that if $(\x, \b) \in I_2$, then $L_\gamma(h^*_\gamma(\x), \b)
= -h^*_\gamma(\x)$. If $\lambda h^*_\gamma(\x) > \btwo$ too then
$L_\gamma(\lambda h^*_\gamma(\x), \b) = - \lambda h^*_\gamma(\x)$. On
the other hand if $\lambda h^*_\gamma(\x) \leq \btwo$ then
$L_\gamma(\lambda h^*_\gamma(\x), \b) = -\btwo \leq - \lambda
h^*_\gamma(\x)$. Thus
\begin{equation}
\label{eq:lhs}
\E(L_\gamma(\lambda h^*_\gamma(\x), \b) - L_\gamma( h^*_\gamma(\x), \b)) \Ind_{I_2}(\x)) 
 \leq (1 - \lambda) \E(h^*_\gamma(\x) \Ind_{I_2}(\x)) 
\end{equation}
This gives an upper bound for the left-hand side of inequality
\eqref{eq:i2i3}. We now seek to derive a lower bound on the
right-hand side. To do that, we analyze two different cases:
\begin{enumerate}
 \item $\lambda h^*_\gamma(\x)  \leq \bone$;
\item $\lambda h^*_\gamma(\x) > \bone$.
\end{enumerate}
In the first case, we know that $L_\gamma(h^*_\gamma(\x), \b) =
\frac{1}{\gamma} (h^*_\gamma(\x) - (1 + \gamma) \bone )> -\bone$
(since $h^*_\gamma(\x) > \bone$ for $(\x, \b) \in I_3$). Furthermore,
if $\lambda h^*_\gamma(\x) \leq \bone$, then, by definition
$L_\gamma(\lambda h^*_\gamma(\x), \b ) = \min(-\btwo, -\lambda
h^*_\gamma(\x) )\leq -\lambda h^*_\gamma(\x)$. Thus, we must have:
\begin{equation}
\label{eq:rhs1}
   L_\gamma(h^*_\gamma(\x), \b) - L_\gamma(\lambda h^*_\gamma(\x), \b) 
   > \lambda h^*_\gamma(\x) - \bone
   > (\lambda - 1) \bone 
   \geq (\lambda - 1) M,
\end{equation}
where we used the fact that $h^*_\gamma(\x) > \bone$ for the second
inequality and the last inequality holds since $\lambda - 1 < 0$.

We analyze the second case now. If $\lambda h^*_\gamma(\x) > \bone$,
then for $(\x, \b) \in I_3$ we have $L_\gamma(h^*_\gamma(\x), \b) -
L_\gamma(\lambda h^*_\gamma(\x), \b) = \frac{1}{\gamma} (1 - \lambda)
h^*_\gamma(\x)$. Thus, letting $\Delta(\x, \b) =
L_\gamma(h^*_\gamma(\x), \b) - L_\gamma(\lambda h^*_\gamma(\x), \b)$,
we can lower bound the right-hand side of \eqref{eq:i2i3} as:
\begin{align}
\E_{\x,\b}\Big[ \Delta(\x, \b)\Ind_{I_3}(\x)\Big] 
& =\E_{\x,\b}\Big[\Delta(\x, \b) \Ind_{I_3}(\x) \Ind_{\{\lambda
  h^*_\gamma(\x) > \bone\}}\Big] 
+ \E_{\x,\b}\Big[\Delta(\x, \b) \Ind_{I_3}(\x) \Ind_{\{\lambda h^*_\gamma(\x) \leq \bone\}}\Big]
  \nonumber \\
&\geq \frac{1 - \lambda}{\gamma}\E_{\x,\b} \Big[h^*_\gamma(\x)
\Ind_{I_3}(\x) \Ind_{\{\lambda h^*_\gamma(\x) > \bone\}}\Big] 
+ (\lambda - 1) M  \Pr\Big[h^*_\gamma(\x) > \bone \geq \lambda
h^*_\gamma(\x)\Big],
\label{eq:rhs2}
\end{align}
where we have used \eqref{eq:rhs1} to bound the second
summand. Combining inequalities \eqref{eq:i2i3}, \eqref{eq:lhs} and
\eqref{eq:rhs2} and dividing by $(1 - \lambda)$ we obtain the bound
\begin{equation*}
  \E_{\x, \b}\Big[h^*_\gamma(\x) \Ind_{I_2}(\x)\Big] 
   \geq \frac{1}{\gamma} \E_{\x, \b}
  \Big[ h^*_\gamma(\x) \Ind_{I_3}(\x)\Ind_{\{\lambda h^*_\gamma(\x) >
    \bone\}} \Big] 
 - M \Pr\Big[h^*_\gamma(\x) > \bone \geq \lambda h^*_\gamma(\x)\Big].
\end{equation*}
Finally, taking the limit $\lambda \rightarrow 1$, we obtain
\begin{equation*}
  \E_{\x, \b}\Big[h^*_\gamma(\x) \Ind_{I_2}(\x)\Big] \geq \frac{1}{\gamma}
  \E_{\x, \b} \Big[ h^*_\gamma(\x) \Ind_{I_3}(\x)\Big]. 
\end{equation*}
Taking the limit inside the expectation is justified by the bounded
convergence theorem and $\Pr[h^*_\gamma(\x) > \bone \geq \lambda
h^*_\gamma(\x)] \rightarrow 0$ holds by the continuity of probability
measures.
\end{proof}

\begin{proof}[Of Theorem~\ref{th:calibration}].
We can express the difference as
\begin{align}
\E_{\x,\b}\Big[L(h^*_\gamma(\x), \b)  - L_\gamma(h^*_\gamma(\x), \b)
\Big]  
&= \sum_{k=1}^4 \E_{\x, \b} \Big[(L(h^*_\gamma(\x), \b)- L_\gamma(h^*_\gamma(\x), \b)) \Ind_{I_k}(\x)\Big] \nonumber \\
& =  \E_{\x, \b} \Big[(L(h^*_\gamma(\x), \b) - L_\gamma(h^*_\gamma(\x), \b)) \Ind_{I_3}(\x) \Big]\nonumber\\
& = \E_{\x, \b} \Big[\frac{1}{\gamma}((1+\gamma) \bone -
h^*_\gamma(\x))\Ind_{I_3}(\x))\Big] .
\label{eq:lossdiff}
\end{align}
Furthermore, for $(\x,\b) \in I_3$, we know that $\bone <
h^*_\gamma(\x)$. Thus, we can bound \eqref{eq:lossdiff} by $\E_{\x,
  \b} [h^*_\gamma(\x) \Ind_{I_3}(\x)]$, which, by
Lemma~\ref{lemma:wstarprop}, is less than $\gamma \E_{\x,\b}
\Big[h^*_\gamma(\x) \Ind_{I_2}(\x)\big]$. We thus have:
\begin{equation*}
\E_{\x,\b} \Big[L(h^*_\gamma(\x), \b)\Big] -\E_{\x, \b}
  \Big[L_\gamma(h^*_\gamma(\x), \b) \Big] \\
\leq \gamma \E_{\x, \b}\Big[h^*_\gamma(\x) \Ind_{I_2}(\x)\Big] 
\leq \gamma \E_{\x, \b}\Big[\bone \Ind_{I_2}(\x)\Big] 
\leq  \gamma M,
\end{equation*}
since $h^*_\gamma(\x) \leq \bone$ for $(\x, \b) \in I_2$. 
\end{proof}
Notice that, since $L \geq L_\gamma$ for all $\gamma \geq 0$, it
follows easily from the proposition that $\cL_\gamma(h^*_\gamma)
\rightarrow \cL(h^*)$. Indeed, if $h^*$ is the best hypothesis in
class for the real loss, then the following inequalities are
straightforward:
\begin{align*}
0 \leq \cL_\gamma(h^*) - \cL_\gamma(h^*_\gamma) & \leq \cL(h^*) -
\cL_\gamma(h^*_\gamma) \\
& \leq \cL(h^*_\gamma) - \cL_\gamma(h^*_\gamma) \leq \gamma M
\end{align*}
The $1/\gamma$-Lipschitzness of $L_\gamma$ can be used to
prove the following generalization bound.
\begin{theorem}
\label{th:margin} 
Fix $\gamma \in (0, 1]$ and let $S$ denotes a sample of size $m$.
Then, for any $\delta > 0$, with probability at least $1 - \delta$ over
the choice of the sample $S$, for all $h \in H$, the following holds:
\begin{equation}
\label{eq:margin}
\cL_\gamma(h) \leq \h \cL_\gamma(h) + \frac{2}{\gamma} \R_m(H) + 
M \sqrt{\frac{\log \frac{1}{\delta}}{2m}}.
\end{equation}
\end{theorem}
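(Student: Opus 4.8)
The plan is to treat this as a routine application of a Rademacher-complexity generalization bound to the loss family $L_{\gamma H} = \{(\x, \b) \mapsto L_\gamma(h(\x), \b) \colon h \in H\}$, exactly as was done for $L_H$ in Theorem~\ref{th:learningbound}. The standard bound of \citep{KoltchinskiiPanchenko2002} states that, with probability at least $1 - \delta$, every element $g$ of a loss class $G$ with values in an interval of width $c$ satisfies $\E[g] \leq \frac{1}{m}\sum_i g(z_i) + 2\R_m(G) + c\sqrt{\log(1/\delta)/(2m)}$. Applying this to $G = L_{\gamma H}$ and reading off $\cL_\gamma(h) = \E[L_\gamma(h(\x),\b)]$ and $\h\cL_\gamma(h) = \frac{1}{m}\sum_i L_\gamma(h(\x_i),\b_i)$, the theorem reduces to two elementary verifications: a bound $\R_m(L_{\gamma H}) \leq \frac{1}{\gamma}\R_m(H)$ on the complexity term, and the identification of the range width $c = M$ controlling the concentration term.

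For the complexity term I would use the contraction lemma. For each fixed $\b_i$, the map $\psi_i \colon r \mapsto L_\gamma(r, \b_i)$ is piecewise affine with slopes $0$ (on $I_1$ and $I_4$), $1$ (on $I_2$), and $1/\gamma$ (on $I_3$); since $\gamma \in (0,1]$ forces $1/\gamma \geq 1$, the global Lipschitz constant of $\psi_i$ is exactly $1/\gamma$. Writing $\h\R_S(L_{\gamma H}) = \frac{1}{m}\E_\ssigma[\sup_{h\in H}\sum_i \sigma_i (\psi_i \circ h)(\x_i)]$ and invoking Lemma~\ref{lemma:contraction} as in the proof of Proposition~\ref{prop:l1}, the $1/\gamma$-Lipschitzness yields $\h\R_S(L_{\gamma H}) \leq \frac{1}{\gamma}\h\R_S(H)$, and taking expectations over $S$ gives $\R_m(L_{\gamma H}) \leq \frac{1}{\gamma}\R_m(H)$. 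This contributes the term $\frac{2}{\gamma}\R_m(H)$.

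For the concentration term I would check that $L_\gamma(\cdot,\b)$ takes values in $[-M,0]$: on $I_1$ it equals $-\btwo \in [-M,0]$, on $I_2$ it equals $-r \in [-\bone,-\btwo)$, on $I_3$ it equals $\frac{1}{\gamma}(r - (1+\gamma)\bone)$ which increases from $-\bone$ at $r = \bone$ to $0$ at $r = (1+\gamma)\bone$, and on $I_4$ it is $0$. Hence $L_\gamma$ lies in an interval of width $M$ (its minimum $-\bone \geq -M$ is attained at $r = \bone$), and since a constant shift leaves Rademacher complexity unchanged, the concentration term is $M\sqrt{\log(1/\delta)/(2m)}$. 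Combining the two estimates in the standard bound gives exactly \eqref{eq:margin}.

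There is no genuine obstacle here; the only points requiring care are the two checks just described. The slightly delicate one is confirming that $1/\gamma$, rather than $1$, is the correct global Lipschitz constant of $\psi_i$, which is where the hypothesis $\gamma \in (0,1]$ is used; the range computation is then immediate from the piecewise definition of $L_\gamma$.
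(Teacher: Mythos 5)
Your proposal is correct and follows essentially the same route as the paper: contraction (Lemma~\ref{lemma:contraction}) applied to the $\frac{1}{\gamma}$-Lipschitz maps $r \mapsto L_\gamma(r,\b_i)$ to get $\R_m(\cL_{\gamma,H}) \leq \frac{1}{\gamma}\R_m(H)$, followed by the standard Rademacher complexity bound with range width $M$. Your explicit verifications of the piecewise slopes (where $\gamma \leq 1$ makes $1/\gamma$ the global constant) and of the range $[-M,0]$ are details the paper leaves implicit, but the argument is the same.
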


\begin{proof}
  Let $\cL_{\gamma, H}$ denote the family of functions $\set{(\x, \b)
    \to L_\gamma(h(\x), b)\colon h \in H}$. The loss function
  $L_\gamma$ is $\frac{1}{\gamma}$-Lipschitz since the slope of the
  lines defining it is at most $\frac{1}{\gamma}$. Thus, using the
  contraction lemma (Lemma~\ref{lemma:contraction}) as in the proof of
  Proposition~\ref{prop:l1} gives $\R_m(\cL_{\gamma, H}) \leq
  \frac{1}{\gamma} \R_m(H)$. The application of a standard Rademacher
complexity bound to the family of functions $\cL_{\gamma, H}$ then
shows that for any $\delta > 0$, with probability at least $1 -
\delta$, for any $h \in H$, the following holds:
\begin{equation*}
\cL_\gamma(h) \leq \h \cL_\gamma(h) + \frac{2}{\gamma} \R_m(H) +
 M \sqrt{\frac{\log \frac{1}{\delta}}{2m}}.
\end{equation*}
\end{proof}

We conclude this section by presenting a stronger form of consistency
result. We will show that we can lower bound the generalization error
of the best hypothesis in class $\cL^* := \cL(h^*)$ in terms of that
of the empirical minimizer of $L_\gamma$, $\h h_\gamma:=
\argmin_{h \in H} \h \cL_\gamma(h)$.

\begin{theorem}
\label{th:lowerbound}
Let $M = \sup_{b \in \mathcal{B}} \bone$ and let $H$ be a hypothesis
set with pseudo-dimension $d = \text{Pdim}(H).$ Then for any $\delta >
0$ and a fixed value of $\gamma >0$, with probability at least $1 -
\delta$ over the choice of a sample $S$ of size $m$, the following
inequality holds:
\begin{equation*}
\cL(\h h_\gamma) \leq \cL^* + \frac{2 \gamma + 2}{\gamma} \R_m( H )
 + \gamma M  + 2M \sqrt{\frac{2d \log \frac{\e m}{d}}{m}} 
+ 2M \sqrt{\frac{\log\frac{2}{\delta}}{2m}}.
\end{equation*}
\end{theorem}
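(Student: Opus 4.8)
The plan is to sandwich $\cL(\h h_\gamma)$ between $\cL^*$ and a sum of three kinds of error — the real-loss estimation error, the surrogate estimation error, and the $\gamma M$ calibration gap — by chaining the two generalization bounds already proven with a calibration argument transported to the empirical distribution. Concretely, I would first apply the real-loss uniform bound of Theorem~\ref{th:learningbound}, at confidence $1-\delta/2$, to the in-class (though data-dependent) hypothesis $\h h_\gamma$, obtaining $\cL(\h h_\gamma) \le \h\cL_S(\h h_\gamma) + 2\R_m(H) + 2M\sqrt{2d\log(em/d)/m} + M\sqrt{\log(2/\delta)/(2m)}$. This isolates the empirical real loss $\h\cL_S(\h h_\gamma)$, which must now be related to the surrogate.

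The crucial step is to bound $\h\cL_S(\h h_\gamma) - \h\cL_\gamma(\h h_\gamma)$ by $\gamma M$. I claim this is exactly Theorem~\ref{th:calibration} applied to the empirical distribution $\h D$ that puts mass $1/m$ on each sample point, rather than to $D$: the hypotheses on $H$ (closed, convex, containing $0$) are distribution-free, $\sup \bone$ over the support of $\h D$ is at most $M$, and by definition $\h h_\gamma = \argmin_{h\in H}\h\cL_\gamma(h)$ is precisely the surrogate minimizer under $\h D$, i.e.\ the ``$h^*_\gamma$'' for that distribution. Hence the conclusion of Theorem~\ref{th:calibration}, reread with $\E$ denoting $\E_{\h D}$, gives $\h\cL_S(\h h_\gamma) \le \h\cL_\gamma(\h h_\gamma) + \gamma M$. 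One should verify that the single appeal to continuity of measure at the end of Lemma~\ref{lemma:wstarprop} survives for a discrete $\h D$; it does, since the relevant events decrease to the empty set as $\lambda \to 1$ and $\h D$ is a finite measure.

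Next I would use the empirical optimality of $\h h_\gamma$, namely $\h\cL_\gamma(\h h_\gamma) \le \h\cL_\gamma(h^*)$, to pass to the fixed comparator $h^*$, and then the surrogate generalization bound of Theorem~\ref{th:margin} in the reverse direction — which holds with the same $\tfrac1\gamma$-Lipschitz Rademacher term by symmetry of $\R_m$ — at confidence $1-\delta/2$, to get $\h\cL_\gamma(h^*) \le \cL_\gamma(h^*) + \tfrac2\gamma\R_m(H) + M\sqrt{\log(2/\delta)/(2m)}$. Finally, $L_\gamma \le L$ pointwise yields $\cL_\gamma(h^*) \le \cL(h^*) = \cL^*$. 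Assembling these inequalities, combining the two Rademacher contributions into $\big(2+\tfrac2\gamma\big)\R_m(H) = \tfrac{2\gamma+2}{\gamma}\R_m(H)$ and the two confidence terms into $2M\sqrt{\log(2/\delta)/(2m)}$, and taking a union bound over the two $\delta/2$ events gives the claim.

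The main obstacle is the second step: recognizing that the calibration theorem, having been proven for \emph{every} distribution, can be instantiated at the empirical measure so that the data-dependent minimizer $\h h_\gamma$ inherits the $\gamma M$ gap between its empirical real loss and empirical surrogate loss. Everything else is bookkeeping — the real-loss bound supplies the $2\R_m(H)$ and pseudo-dimension terms, the reversed surrogate bound supplies the $\tfrac2\gamma\R_m(H)$ term, and $L_\gamma \le L$ together with empirical minimality closes the chain.
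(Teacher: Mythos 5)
Your proof is correct and follows essentially the same route as the paper: Theorem~\ref{th:learningbound} at confidence $1-\delta/2$, the calibration argument (the paper invokes Lemma~\ref{lemma:wstarprop}, you invoke Theorem~\ref{th:calibration}, which rests on that lemma) instantiated at the empirical distribution to obtain the $\gamma M$ gap for $\h h_\gamma$, empirical optimality of $\h h_\gamma$, a reversed surrogate generalization bound at confidence $1-\delta/2$, the pointwise inequality $L_\gamma \leq L$, and a union bound. The only cosmetic difference is that you compare directly against $h^*$, whereas the paper compares against $h^*_\gamma$ and then uses $\cL_\gamma(h^*_\gamma) \leq \cL_\gamma(h^*) \leq \cL(h^*)$; both chains produce the identical bound.
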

\begin{proof}
By Theorem~\ref{th:learningbound}, with probability at least
$1 - \delta/2$, the following holds:
\begin{equation}
  \label{eq:lbound}
  \cL(\h  h_\gamma) \leq \h \cL_S(\h h_\gamma) + 2 \R_m( H ) +
 2M \sqrt{\frac{2d \log \frac{\e m}{d}}{m}} + M \sqrt{\frac{\log\frac{2}{\delta}}{2m}}.
\end{equation}
Furthermore, applying Lemma~\ref{lemma:wstarprop} with the empirical
distribution induced by the sample, we can bound $\h \cL_S(\h
h_\gamma)$ by $\h \cL_\gamma(\h h_\gamma ) + \gamma M$. The first term
of the previous expression is less than $\h \cL_\gamma(h^*_\gamma)$ by
definition of $\h h_\gamma$. Finally, the same analysis as the one
used in the proof of Theorem~\ref{th:margin} shows that with
probability $1 - \delta/2$,
\begin{equation*}
  \h \cL_\gamma(h^*_\gamma) \leq \cL_\gamma(h_\gamma^*) + 
\frac{2}{\gamma} \R_m( H ) + M \sqrt{\frac{\log \frac{2}{\delta}}{2m}}.
\end{equation*}
Again, by definition of $h^*_\gamma$ and using the fact
that $L$ is an upper bound on $L_\gamma$, we can write
$\cL_\gamma(h^*_\gamma) \leq \cL_\gamma(h^*) \leq \cL(h^*)$. Thus,
\begin{align*}
\h \cL_S(\h h_\gamma) \leq \cL(h^*) + \frac{1}{\gamma} \R_m( H
) + 
M \sqrt{\frac{\log \frac{2}{\delta}}{2m}} + \gamma M.
\end{align*}
Combining this with \eqref{eq:lbound} and applying the union bound
yields the result.
\end{proof}
This bound can be extended to hold uniformly over all $\gamma$ at the
price of a term in $O\Big(\frac{\sqrt{\log \log_2
    \frac{1}{\gamma}}}{\sqrt{m}}\Big)$. Thus, for appropriate choices
of $\gamma$ and $m$ (for instance $\gamma \gg 1/m^{1/4}$) it would
guarantee the convergence of $\cL(\h h_\gamma)$ to $\cL^*$, a
stronger form of consistency.

These results are
reminiscent of the standard margin bounds with $\gamma$ playing the
role of a margin. The situation here is however somewhat
different. Our learning bounds suggest, for a fixed $\gamma \in (0,
1]$, to seek a hypothesis $h$ minimizing the empirical loss $\h
\cL_\gamma(h)$ while controlling a complexity term upper bounding
$\R_m(H)$, which in the case of a family of linear hypotheses could be
$\| h \|_K^2$ for some PSD kernel $K$.  Since the bound can hold
uniformly for all $\gamma$, we can use it to select $\gamma$ out of a
finite set of possible grid search values. Alternatively, $\gamma$ can
be set via cross-validation.

\section{Algorithms}
\label{sec:algorithms}

In this section we present algorithms for solving the optimization
problem for selecting the reserve price.  We start with the no-feature
case and then treat the general case.

\subsection{No feature case}
\label{sec:algo-no-feature}

We present a general algorithm to optimize sums of functions similar
to $L_\gamma$ or $L$ in the one-dimensional case.

\begin{figure}[t]
\centering
\begin{tabular}{cc}
\raisebox{-.3cm}{\includegraphics[scale=.5]{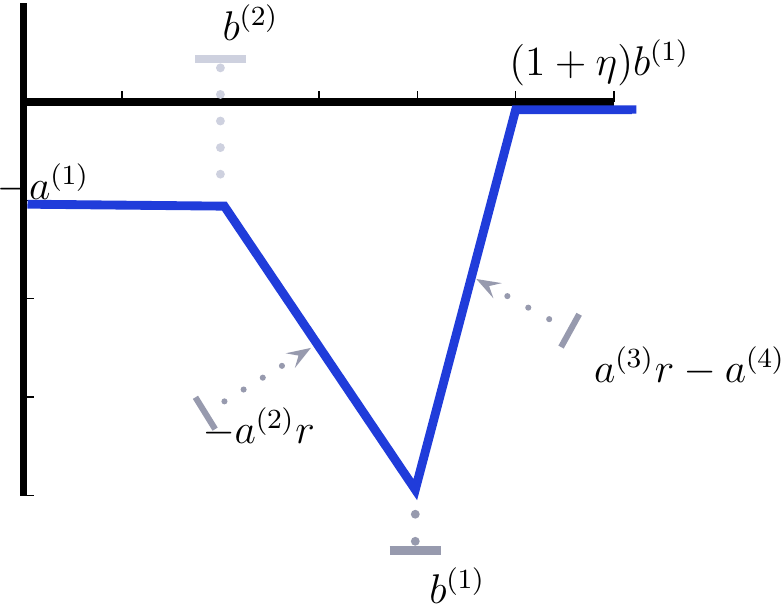}} &
\includegraphics[scale=.65]{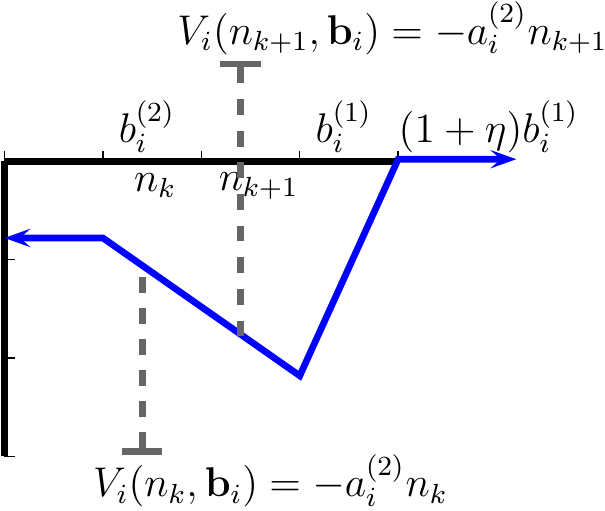}\\
(a) & (b)
\end{tabular}
\caption{(a) Prototypical $v$-function. (b) Illustration of the fact
that the definition of $V_i(r, \b_i)$ does not change on an interval
$[n_k, n_{k+1}]$.}
\vspace{-.1cm}
\label{fig:vloss}
\end{figure}

\begin{definition}
  We will say that function $V\colon \Rset \times \cB \to
  \Rset$ is a \emph{$v$-function} if it admits the following form:
\begin{equation*}
V(r, \b) = -a^{(1)} \Ind_{r \leq \btwo} -a^{(2)}r \Ind_{\btwo < r \leq \bone} +\\
(a^{(3)}r - a^{(4)})\Ind_{\bone < r < (1 + \eta) \bone},
\end{equation*}
with $a^{(1)} > 0$ and $\eta > 0$ constants and $a^{(1)}, a^{(2)}, a^{(3)}, 
a^{(4)}$ defined by $a^{(1)} = \eta a^{(3)} \btwo$, $a^{(2)} = \eta
a^{(3)}$, and $a^{(4)} = a^{(3)} (1 + \eta) \bone$.
\end{definition}
Figure~\ref{fig:vloss}(a) illustrates this family of loss functions.  A
$v$-function is a generalization of $L_\gamma$ and $L$. Indeed, any
$v$-function $V$ satisfies $V(r,\b) \leq 0$ and attains its minimum at
$\bone$. Finally, as can be seen straightforwardly from
Figure~\ref{fig:rho_loss}, $L_\gamma$ is a $v$-function for any
$\gamma \geq 0$.  We consider the following general problem of
minimizing a sum of $v$-functions:
\begin{equation}
\label{eq:1dproblem} 
\min_{r \geq 0} \ F(r) := \sum_{i = 1}^m V_i(r, \b_i).
\end{equation}
Observe that this is not a trivial problem since, for any fixed
$\b_i$, $V_i(\cdot, \b_i)$ is non-convex and that, in general, a sum of
$m$ such functions may admit many local minima. The following
proposition shows that the minimum is attained at one of the highest
bids, which matches the intuition.
\begin{proposition}
\label{prop:minproperty} 
  Problem \eqref{eq:1dproblem} admits a solution $r^*$ that satisfies
  $r^* = \bone_i$ for some $i \in [1, m]$.
\end{proposition}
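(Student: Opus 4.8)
The plan is to exploit the fact that $F$ is a continuous piecewise-linear function whose only \emph{convex} kinks sit at the points $\bone_i$. First I would record the elementary shape of a single $v$-function: for fixed $\b_i$, the map $r \mapsto V_i(r, \b_i)$ is affine on each of the intervals $[0, \btwo_i]$, $(\btwo_i, \bone_i]$, $(\bone_i, (1+\eta_i)\bone_i)$, and $[(1+\eta_i)\bone_i, \infty)$, with respective slopes $0$, $-a^{(2)}_i < 0$, $a^{(3)}_i > 0$, and $0$ (all constants strictly positive by the defining relations). Summing over $i$, the function $F$ is continuous and piecewise linear with breakpoints contained in the finite set $\{\btwo_i\} \cup \{\bone_i\} \cup \{(1+\eta_i)\bone_i\}$, and on each maximal affine piece its slope is the sum of the individual slopes above.

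Next I would argue that the minimum is attained and may be taken at a breakpoint. Since $V_i(r, \b_i) = 0$ for $r \geq (1+\eta_i)\bone_i$, we have $F(r) = 0$ for every $r$ beyond $R := \max_i (1+\eta_i)\bone_i$, while $F(\bone_j) \leq V_j(\bone_j, \b_j) \leq 0$; hence, by continuity, $F$ attains its minimum over $[0, \infty)$ at a finite point. Let $r^*$ be the \emph{largest} global minimizer, which is well defined since the minimizer set is closed and bounded. If $r^*$ were interior to an affine piece, that piece would have slope $0$ (else moving along the descending direction would lower $F$), hence be flat at the minimal value, producing minimizers larger than $r^*$ and contradicting maximality. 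Thus $r^*$ is a breakpoint.

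The heart of the argument is the slope-jump analysis at $r^*$. Writing $F'_-$ and $F'_+$ for the left and right derivatives, minimality gives $F'_-(r^*) \leq 0$, while maximality forces $F'_+(r^*) > 0$ (a value $\leq 0$ would make $F$ nonincreasing to the right, again yielding a minimizer $> r^*$); hence the net jump $\Delta := F'_+(r^*) - F'_-(r^*) > 0$. On the other hand, summing the per-function jumps gives
$\Delta = \sum_{i : \bone_i = r^*}(a^{(2)}_i + a^{(3)}_i) - \sum_{j : \btwo_j = r^*} a^{(2)}_j - \sum_{k : (1+\eta_k)\bone_k = r^*} a^{(3)}_k$,
since crossing a point $\btwo_j$ or $(1+\eta_k)\bone_k$ lowers the slope (a concave kink) whereas crossing $\bone_i$ raises it (a convex kink). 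All terms but the first group are nonpositive, so $\Delta > 0$ forces at least one index $i$ with $\bone_i = r^*$, which is exactly the assertion $r^* = \bone_i$. Note the boundary case $r^* = 0$ is excluded automatically, since there $F'_+(0) \leq 0$ contradicts $F'_+(r^*) > 0$.

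The step I expect to require the most care is precisely this reduction to a breakpoint together with the correct handling of flat minimizing intervals and of coincident breakpoints (a point that is simultaneously some $\btwo_j$ and some $\bone_i$). A naive choice of minimizer could land in the interior of a flat segment or at a purely concave kink; selecting the \emph{largest} minimizer and phrasing optimality through one-sided derivatives is what makes all these cases uniform, and it is also what lets the sign of the net kink $\Delta$ cleanly detect the presence of a $\bone_i$ among the active breakpoints.
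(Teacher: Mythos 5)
Your argument is correct, but it takes a genuinely different route from the paper's. The paper never invokes piecewise linearity explicitly: it proves two displacement lemmas (Lemmas~\ref{lemma:direction} and~\ref{lemma:step}) showing that each $V_i$ is monotone and concave for displacements $[-\e, \e] \subset \Omega(r)$ that avoid crossing the points $\bone_i$, deduces that from any $r \notin \set{\bone_1, \ldots, \bone_m}$ the function $F$ fails to increase in at least one direction, and then uses a concavity/scaling inequality to slide $r$ all the way to the nearest $\bone_{i^*}$ in that direction without increasing $F$. A byproduct is that attainment of the minimum comes for free: every value $F(r)$ is dominated by some $F(\bone_i)$, so the infimum is a minimum over a finite set. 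Your proof instead needs attainment up front (compactness of $[0,R]$ plus $F \equiv 0$ beyond $R$), after which the kink-sign analysis does the work: the largest minimizer $r^*$ must sit at a breakpoint, and since the slope jumps at $\btwo_j$ and at $(1+\eta_k)\bone_k$ are negative while only the jumps at $\bone_i$ are positive, $\Delta > 0$ forces $r^* = \bone_i$ for some $i$. What your route buys: the additive jump bookkeeping handles coincident breakpoints and ties such as $\btwo_i = \bone_i$ completely transparently (cases the paper's lemmas pass over in silence), and it isolates exactly the structural fact---minimizers live at convex kinks---that the $O(m \log m)$ algorithm later exploits. What the paper's route buys: it uses only concavity and monotonicity of each $V_i$ between the $\bone_i$'s, so it would survive a generalization in which the pieces of a $v$-function are concave curves rather than line segments.

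One small step to tighten: the existence of a \emph{largest} global minimizer requires the minimizer set to be bounded, which needs $\min_r F(r) < 0$ strictly, whereas you only record $F(\bone_j) \leq V_j(\bone_j, \b_j) \leq 0$. The fix is immediate from the positivity constraints you already cite: $a_j^{(1)} = \eta_j a_j^{(3)} \btwo_j > 0$ forces $\btwo_j > 0$ and $a_j^{(2)}, a_j^{(3)} > 0$, hence $V_j(\bone_j, \b_j) = -a_j^{(2)} \bone_j < 0$ (or $-a_j^{(1)} < 0$ in the degenerate case $\btwo_j = \bone_j$), so $\min_r F(r) \leq F(\bone_j) < 0$ and the minimizers form a closed subset of $[0, R]$, as your argument requires.
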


In order to proof this proposition a pair of lemmas are required.

\begin{definition}
For any $r \in \Rset$, define the following subset of $\Rset$: 
\begin{align*}
  \Omega(r) = \{\epsilon | r < \bone_i \leftrightarrow r + \epsilon
  \leq \bone_i \ \forall i\} \\
\end{align*}
\end{definition}

We will drop the dependency on $r$ when this value is understood from context.
\begin{figure}[t]
\centering
\includegraphics[scale=.8]{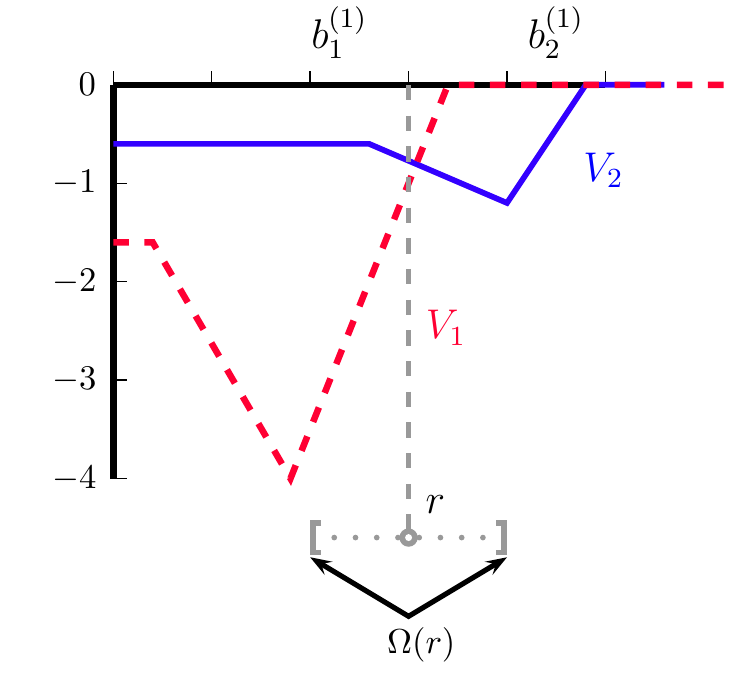}
\caption{Illustration of the region $\Omega(r)$. The functions $V_i$ are
  monotonic and concave when restricted to this region.}
\label{fig:omega}
\end{figure}

\begin{lemma}
\label{lemma:direction} Let $r \neq \bone_i$ for all $i$. If $\epsilon
> 0$ is such that $[-\e, \e] \subset \Omega(r)$ then $F(r + \epsilon)
< F(r)$ or $F(r - \epsilon) \leq F(r)$.
\end{lemma}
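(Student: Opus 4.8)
The plan is to show that the hypothesis $[-\epsilon,\epsilon]\subset\Omega(r)$ forces $F$ to be concave on the whole interval $[r-\epsilon,r+\epsilon]$, and then to extract the claimed disjunction from the midpoint concavity inequality by a one-line case analysis. First I would examine the geometry of a single $v$-function. For fixed $\b_i$, the map $r\mapsto V_i(r,\b_i)$ is piecewise linear with breakpoints at $\btwo_i$, $\bone_i$, and $(1+\eta)\bone_i$. On the half-line $(-\infty,\bone_i]$ its successive slopes are $0$ and $-a^{(2)}_i$, with continuity at $\btwo_i$ guaranteed by the defining relation $a^{(1)}_i=a^{(2)}_i\btwo_i$; since the slopes are non-increasing, $V_i$ is concave there. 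On $[\bone_i,\infty)$ its successive slopes are $a^{(3)}_i$ and $0$, continuous at $(1+\eta)\bone_i$ by $a^{(4)}_i=a^{(3)}_i(1+\eta)\bone_i$, hence again concave. The \emph{only} obstruction to global concavity is the breakpoint $\bone_i$, where the slope jumps \emph{upward} from $-a^{(2)}_i$ to $a^{(3)}_i$: this is a convex kink, which is precisely the minimizer of $V_i$.

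Next I would unwind the definition of $\Omega(r)$. For any shift $\delta$, the condition $r<\bone_i\Leftrightarrow r+\delta\le\bone_i$ says exactly that $r+\delta$ lands on the same side of the kink $\bone_i$ as $r$ does: strictly to the right when $r>\bone_i$, and weakly to the left when $r<\bone_i$. Since $r\neq\bone_i$ for every $i$, the hypothesis $[-\epsilon,\epsilon]\subset\Omega(r)$ therefore means that for every $i$ the entire interval $[r-\epsilon,r+\epsilon]$ is contained in a single concave half-line of $V_i$ (either $(-\infty,\bone_i]$ or $(\bone_i,\infty)$), so no sub-interval crosses a kink. Consequently each $V_i$ is concave on $[r-\epsilon,r+\epsilon]$, and hence so is the sum $F=\sum_{i=1}^m V_i$. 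This is the step carrying the real content, and the one I would write most carefully: one must check, from the constraints relating $a^{(1)}_i,\dots,a^{(4)}_i$, that $V_i$ is genuinely concave away from $\bone_i$, and that $\Omega(r)$ is exactly the set of shifts that avoid the lone convex kink.

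Finally I would apply midpoint concavity: $F(r)\ge\tfrac12\big(F(r-\epsilon)+F(r+\epsilon)\big)$, which rearranges to $\big(F(r)-F(r+\epsilon)\big)+\big(F(r)-F(r-\epsilon)\big)\ge 0$. If $F(r+\epsilon)<F(r)$, the first disjunct of the claim holds and we are done. Otherwise $F(r+\epsilon)\ge F(r)$, so the first summand is $\le 0$, and the displayed inequality forces $F(r)-F(r-\epsilon)\ge 0$, i.e. $F(r-\epsilon)\le F(r)$, the second disjunct. I expect this midpoint/case step to be entirely routine; the main obstacle is the concavity claim of the previous paragraph, after which the disjunction is immediate.
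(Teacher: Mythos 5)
Your proof is correct, and it is cleaner than the one in the paper. Both arguments hinge on the same geometric fact — that the hypothesis $[-\e,\e]\subset\Omega(r)$ confines each $V_i$, on the interval $[r-\e,r+\e]$, to one of its two concave half-lines (the only convex kink being at the minimizer $\bone_i$) — but the executions differ. The paper never assembles this into concavity of the sum $F$: instead it partitions the indices into the sets $D(\e)$ and $I(\e)$ of functions that decrease or increase under the shift $+\e$, invokes monotonicity of each $V_i$ on the interval to identify $D(\e)=I(-\e)$ and $I(\e)=D(-\e)$, and then applies the per-function midpoint-concavity inequality $\tfrac12\bigl(v_i(-\e)+v_i(\e)\bigr)\le v_i$ within each index set to flip the inequality and conclude $F(r-\e)\le F(r)$. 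You simply sum that same midpoint inequality over \emph{all} $i$ to get $F(r)\ge\tfrac12\bigl(F(r-\e)+F(r+\e)\bigr)$, from which the stated disjunction is an immediate two-case observation; the monotonicity claim and the index-set bookkeeping turn out to be unnecessary for this lemma. What the paper's heavier machinery buys is reuse: the sets $D(\cdot)$, $I(\cdot)$ and the monotonicity observation ($D(\e)=D(\lambda\e)$, etc.) are exactly the ingredients of the subsequent Lemma~\ref{lemma:step}, whereas your route would require reintroducing them (or an analogous concavity-scaling argument) there. What your route buys is conceptual clarity: it isolates the real content of the lemma, namely that $F$ is concave on any interval avoiding all the kinks $\bone_i$, and makes the disjunction transparent rather than the outcome of inequality shuffling.
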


\begin{proof} 
  Let $v_i = V_i(r, \b_i)$ and $v_i(\e) = V_i(r + \epsilon, \b_i)$.
  For $\epsilon \in \Omega(r)$ define the sets $D(\epsilon) = \{i ~|~
  v_i(\e) \leq v_i\}$ and $I(\epsilon) = \{i ~|~ v_i(\e) > v_i\}$. If
\begin{equation*}
\sum_{i \in D(\e)} v_i + \sum_{i \in I(\e)} v_i > \sum_{i \in D(\e)} v_i(\e) + \sum_{i \in I(\e)} v_i(\e),
\end{equation*}
then, by definition, we have $F(r) > F(r + \e)$ and the result is
proven.  If this inequality is not satisfied, then, by grouping
indices in $D(\e)$ and $I(\e)$ we must have
\begin{equation}
\label{eq:didiff}
 \sum_{i \in D(\e)} v_i - v_i(\e) ~\leq~ \sum_{i \in I(\e)} v_i(\e) - v_i
\end{equation}
Notice that $v_i(\e) \leq v_i$ if and only if $v_i(-\e) \geq
v_i$. Indeed, the function $V_i(r + \eta, \b_i)$ is monotone for $\eta
\in [-\e, \e]$ as long as $[-\e, \e] \subset \Omega$ which is true by
the choice of $\e$.  This fact can easily be seen in
Figure~\ref{fig:omega}. Hence $D(\e) = I(-\e)$, similarly $I(\e) =
D(-\e)$ Furthermore, because $V_i(r + \eta, \b_i)$ is also concave for
$\eta \in [-\e, \e]$. We must have
\begin{equation}
  \label{eq:concavity}
  \frac{1}{2}(v_i(-\e) + v_i(\e)) \leq v_i.
\end{equation}
Using \eqref{eq:concavity}, the following inequalities are easily derived:
\begin{align}
  \label{eq:lhsbound}
  v_i(-\e) - v_i & \leq v_i - v_i(\e) & \qquad \text{for } i \in D(\e)\\
  \label{eq:rhsbound}   
  v_i(\e) - v_i & \leq v_i - v_i(-\e) & \qquad \text{for } i \in I(\e).
\end{align}
Combining inequalities \eqref{eq:lhsbound}, \eqref{eq:didiff} and \eqref{eq:rhsbound} we obtain
\begin{align*}
\sum_{i \in D(\e)} v_i(-\e) - v_i  &\leq \sum_{i \in I(\e)} v_i -
v_i(-\e)\\
\Rightarrow \quad \sum_{i \in I(-\e)} v_i(-\e) - v_i & \leq \sum_{i \in D(-\e)} v_i - v_i(-\e).
\end{align*}
By rearranging back the terms in the inequality we can easily see that
$F(r - \e) \leq F(r)$.
\end{proof}

\begin{lemma}
\label{lemma:step} 
Under the conditions of Lemma~\ref{lemma:direction}, if $F(r + \e)
\leq F(r)$ then $F(r + \lambda \e) \leq F(r)$ for every $\lambda$ that
satisfies $\lambda \e \in \Omega$ if and only if $\e \in \Omega$.
\end{lemma}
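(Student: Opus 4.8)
The plan is to exploit the fact, already used in the proof of Lemma~\ref{lemma:direction} and illustrated in Figure~\ref{fig:omega}, that on the translated region $r + \Omega(r)$ each summand $V_i(\cdot, \b_i)$ is concave. First I would record that $\Omega(r)$ is itself an interval containing a neighborhood of $0$: unwinding the definition, for each index with $r < \bone_i$ the membership condition reads $\e \le \bone_i - r > 0$, while for each index with $r > \bone_i$ it reads $\e > \bone_i - r < 0$. Thus $\Omega(r)$ is the intersection of finitely many half-lines, i.e.\ an interval of the form $(\alpha, \beta]$ with $\alpha < 0 < \beta$. In particular $\Omega(r)$ is convex, and since $F = \sum_{i=1}^m V_i(\cdot, \b_i)$ is a finite sum of functions each concave on $r + \Omega(r)$, the map $\mu \mapsto F(r + \mu)$ is concave on $\Omega(r)$. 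The content of the lemma, read in its substantive form, is that once $F$ has not increased in the direction $\e$, we may rescale the step by any factor $\lambda \ge 1$ for which $\lambda\e$ stays in $\Omega(r)$ without increasing $F$.

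The core of the argument is then a one-line concavity manipulation. Assume $F(r + \e) \le F(r)$ and let $\lambda \ge 1$ satisfy $\lambda \e \in \Omega(r)$. Because $\Omega(r)$ is an interval containing both $0$ and $\lambda \e$, it contains the whole segment between them, and in particular $\e$; hence $r$, $r + \e$, and $r + \lambda\e$ all lie in the region where $F$ is concave. Since $\lambda \ge 1$ gives $\tfrac{1}{\lambda} \in (0, 1]$, I can write $r + \e$ as the convex combination $r + \e = \tfrac{1}{\lambda}(r + \lambda \e) + \big(1 - \tfrac{1}{\lambda}\big) r$, so concavity yields
\[
F(r + \e) \;\ge\; \tfrac{1}{\lambda}\, F(r + \lambda \e) + \big(1 - \tfrac{1}{\lambda}\big) F(r).
\]
Combining this with the hypothesis $F(r + \e) \le F(r)$ and multiplying through by $\lambda > 0$ after cancelling the $(1-\tfrac1\lambda)F(r)$ terms gives $F(r + \lambda\e) \le F(r)$, which is the claim. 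The symmetric statement with $\e$ replaced by $-\e$, covering steps in the decreasing direction treated in Lemma~\ref{lemma:direction}, follows by the identical computation.

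The step I expect to require the most care is the justification that $F$ is genuinely concave on all of $r + \Omega(r)$, not merely at the three sampled points. This reduces to the per-summand claim behind Lemma~\ref{lemma:direction}: on $\Omega(r)$ no argument crosses a threshold $\bone_i$ in a way that moves $V_i$ onto a new affine branch, so each $V_i(\cdot, \b_i)$ remains within a single monotone-and-concave piece of its $v$-function graph. Making this precise amounts to checking, directly from the definition of a $v$-function, that the affine segments meet concavely at $\btwo_i$ and $\bone_i$; once this is in hand the concavity of the sum and the convex-combination inequality above finish the proof. I would also flag that the restriction $\lambda \ge 1$ is essential rather than cosmetic: for $\lambda \in (0,1)$ a concave $F$ with an interior peak can satisfy $F(r+\e) \le F(r)$ while $F(r + \lambda \e) > F(r)$. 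The lemma therefore genuinely pushes the step \emph{outward} to the endpoint of $\Omega(r)$, where $r + \lambda\e$ coincides with some $\bone_i$, which is precisely what is needed to drive the minimum onto a highest bid in Proposition~\ref{prop:minproperty}.
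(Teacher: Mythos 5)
Your proof is correct, and its core inequality is the same one the paper uses, just applied at a different level of aggregation. The paper proves the convex-combination bound summand by summand --- inequality \eqref{eq:concavelambda}, $\frac{1}{\lambda}(v_i - v_i(\lambda \e)) \geq v_i - v_i(\e)$ --- and then sums it over the partition into decreasing indices $D(\e)$ and increasing indices $I(\e)$ carried over from Lemma~\ref{lemma:direction}, recovering the descent inequality \eqref{eq:descent} with $\e$ replaced by $\lambda\e$. You instead observe that $\Omega(r)$ is an interval containing $0$, that $F$ is concave on $r + \Omega(r)$ because each summand is, and apply the combination inequality once to $F$ itself. This buys a cleaner proof: since concavity is closed under finite sums, the $D(\e)/I(\e)$ bookkeeping --- which is genuinely needed in Lemma~\ref{lemma:direction}, where the moves $+\e$ and $-\e$ must be compared --- is superfluous here. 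Your explicit hypothesis $\lambda \geq 1$ also makes precise a condition that the paper's statement leaves vague but that its proof needs as well (the weights $\frac{\lambda-1}{\lambda}$ and $\frac{1}{\lambda}$ in \eqref{eq:concavelambda} must be nonnegative), and your remark that the claim fails for $\lambda \in (0,1)$ is a sanity check the paper does not make.

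One slip to correct in your final paragraph: the concave kinks of $V_i(\cdot, \b_i)$ are at $\btwo_i$ and $(1+\eta)\bone_i$, not at $\bone_i$. At $\bone_i$ the slope jumps from negative to positive (it is the minimum of $V_i$), which is a \emph{convex} kink. This does not damage your argument: as your preceding sentence correctly says, membership of $\e$ in $\Omega(r)$ is exactly the condition that $r + \e$ stays on the same side of every $\bone_i$ as $r$, so on $r + \Omega(r)$ each $V_i$ lives on a single branch (constant-then-decreasing below $\bone_i$, or increasing-then-constant above it), and each branch contains only concave kinks. But the verification you outline should be stated for $\btwo_i$ and $(1+\eta)\bone_i$, with $\bone_i$ excluded by the definition of $\Omega(r)$ rather than checked for concavity.
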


\begin{proof}
  The proof follows the same ideas as those used in the previous
  lemma. By assumption, we can write
\begin{equation}
  \label{eq:descent}
  \sum_{D(\e)} v_i - v_i(\e) \geq \sum_{i \in I(\e)} v_i(\e) - v_i.
\end{equation}
It is also clear that $I(\e) = I(\lambda \e)$ and $D(\e) = D(\lambda
\e)$. Furthermore, the same concavity argument of
Lemma~\ref{lemma:direction} also yields:
\begin{equation*}
  v_i(\e) \geq \frac{\lambda -1}{\lambda} v_i + \frac{1}{\lambda} v_i(\lambda \e),
\end{equation*}
which can be rewritten as
\begin{equation}
\label{eq:concavelambda}
\frac{1}{\lambda}(v_i - v_i(\lambda \e)) \geq v_i - v_i(\e).
\end{equation}
Applying inequality \eqref{eq:concavelambda} in \eqref{eq:descent} we obtain
\begin{equation*}
  \frac{1}{\lambda} \sum_{D(\lambda \e)} v_i - v_i(\lambda \e) \geq
  \frac{1}{\lambda} 
  \sum_{I(\lambda \e)} v_i(\lambda \e) - v_i.
\end{equation*}
Since $\lambda > 0$, we can multiply the inequality by $\lambda$ to
derive an inequality similar to \eqref{eq:descent} which implies that
$F(r + \lambda \e) \leq F(r)$.
\end{proof}

\begin{proof} (Of Proposition~\ref{prop:minproperty})
  Let $r \neq \bone_i$ for every $i$. By Lemma~\ref{lemma:direction},
  we can choose $\e\neq 0$ small enough with $F(r + \e) \leq
  F(r)$. Furthermore if $\lambda = \min_i \frac{|\bone_i - r|}{|\e|}$
  then $\lambda$ satisfies the hypotheses of
  Lemma~\ref{lemma:step}. Hence, $F(r) \geq F(r + \lambda \e) =
  F(b_{i^*})$, where $i^*$ is the minimizer of $\frac{|\bone_i -
    r|}{|\e|}$.
\end{proof}

Problem \eqref{eq:1dproblem} can thus be reduced to examining the value of the function
for the $m$ arguments $\bone_i$, $i \in [1, m]$. This yields a
straightforward method for solving the optimization which consists of
computing $F(\bone_i)$ for all $i$ and taking the minimum. But, since
the computation of each $F(\bone_i)$ takes $O(m)$, the overall
computational cost is in $O(m^2)$, which can be prohibitive for even
moderately large values of $m$.  

Instead, we present a combinatorial algorithm to solve the optimization
problem \eqref{eq:1dproblem} in $O(m \log m)$. Let $\mathcal{N} =
\bigcup_i \set{\bone_i, \btwo_i, (1+\eta) \bone_i}$ denote the set of all
\emph{boundary points} associated with the functions $V(\cdot,
\b_i)$. The algorithm proceeds as follows: first, sort the set
$\mathcal{N}$ to obtain the ordered sequence $(n_1, \ldots, n_{3m})$,
which can be achieved in $O(m \log m)$ using a comparison-based
sorting algorithm. Next, evaluate $F(n_1)$ and compute $F(n_{k+1})$
from $F(n_{k})$ for all $k$.

The main idea of the algorithm is the following: since the
definition of $V(\cdot, b_i)$ can only change at boundary points (see
also Figure~\ref{fig:vloss}(b)), computing $F(n_{k+1})$ from
$F(n_{k})$ can be achieved in constant time. Indeed, since between $n_k$ and
$n_{k+1}$ there are only two boundary points, we can compute
$V(n_{k+1} , \b_i)$ from $V(n_{k}, \b_i)$ by calculating $V$ for only
two values of $\b_i$, which can be done in constant time. We now give
a more detailed description and proof of correctness for the
algorithm.

\begin{proposition}
\label{prop:algorithm}
There exists an algorithm to solve optimization problem
\eqref{eq:1dproblem} in $O(m \log m)$.
\end{proposition}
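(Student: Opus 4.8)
The plan is to turn the qualitative description preceding the statement into a rigorous $O(m\log m)$ algorithm. By Proposition~\ref{prop:minproperty}, a minimizer of $F$ can be taken at one of the values $\bone_i$, so it suffices to evaluate $F$ at the $m$ points $\{\bone_i\}$ and return the smallest. The naive approach evaluates each $F(\bone_i)$ from scratch in $O(m)$ time, giving $O(m^2)$. The key to achieving $O(m\log m)$ is to evaluate $F$ not only at the points $\bone_i$ but at \emph{all} boundary points in $\mathcal{N} = \bigcup_i\{\bone_i,\btwo_i,(1+\eta)\bone_i\}$, traversing them in sorted order and updating the value $F(n_{k+1})$ from $F(n_k)$ incrementally in amortized constant time.

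**First** I would sort $\mathcal{N}$ into the increasing sequence $(n_1,\dots,n_{3m})$; this is the only superlinear step and costs $O(m\log m)$. Then I would establish the crucial invariant: on any open interval $(n_k,n_{k+1})$ strictly between consecutive boundary points, \emph{each} summand $V_i(\cdot,\b_i)$ is given by a single fixed affine piece, since by definition $V_i$ changes its functional form only at its own three boundary points $\btwo_i,\bone_i,(1+\eta)\bone_i$, all of which lie in $\mathcal{N}$. Consequently, on $(n_k,n_{k+1})$ the whole sum $F$ is a single affine function $r\mapsto A_k r + B_k$, where $A_k=\sum_i \alpha_i^{(k)}$ and $B_k=\sum_i \beta_i^{(k)}$ aggregate the per-index slopes and intercepts. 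I would maintain the running pair $(A_k,B_k)$ as the sweep advances.

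**The update step** is where the linear-time-per-step bottleneck is broken. When crossing the boundary point $n_{k+1}$, only those indices $i$ for which $n_{k+1}$ equals one of $\btwo_i,\bone_i,(1+\eta)\bone_i$ change their active affine piece; every other summand keeps its slope and intercept. Since the three boundary points are distinct per index (generically) and each point of $\mathcal{N}$ is contributed by a bounded number of indices, the total number of piece-changes across the entire sweep is $O(m)$. For each such change I subtract the old $(\alpha_i,\beta_i)$ and add the new one to $(A_k,B_k)$ in $O(1)$, and I read off the old and new affine pieces directly from the closed-form definition of the $v$-function. Thus computing every $F(n_k)$ over the whole sweep costs $O(m)$ amortized total, and I take the minimum over those $n_k$ that equal some $\bone_i$ (or, more simply, over all $n_k$, since the minimizer is attained at a boundary point). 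To implement the incremental bookkeeping cleanly, I would precompute, for each boundary point, the list of indices it belongs to and the associated piece-transition, which is built in $O(m)$ during or after sorting.

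**The main obstacle** I anticipate is handling the boundary cases carefully rather than any deep difficulty: one must treat points of $\mathcal{N}$ that coincide (several indices sharing a value, or the degenerate case $(1+\eta)\bone_i=\btwo_j$), decide consistently whether each inequality in the definition of $V_i$ is strict or non-strict at a breakpoint, and verify that evaluating $F$ exactly \emph{at} the discrete points $n_k$ (as opposed to on the open intervals) still captures the minimum guaranteed by Proposition~\ref{prop:minproperty}. Since $V_i(\cdot,\b_i)$ is left-continuous with a jump only at $\btwo_i$ and is continuous at $\bone_i$ and $(1+\eta)\bone_i$, and since the sought minimizer sits exactly at some $\bone_i$ where $F$ is continuous, evaluating at the sorted breakpoints is sound; I would state and discharge this continuity remark explicitly so that the incremental values $F(n_k)$ agree with the true function values at the candidate optima.
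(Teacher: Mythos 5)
Your proposal is correct and follows essentially the same approach as the paper: sort the $3m$ boundary points, then sweep through them in order while maintaining the affine representation of $F$ on the current interval (your running pair $(A_k,B_k)$ is just a condensed form of the paper's four coefficients $c_j^{(1)},\dots,c_j^{(4)}$ in Algorithm~\ref{alg:sorting}), updating in $O(1)$ per boundary crossing and taking the minimum at the end. One minor correction to your closing remark: a $v$-function has no jump at $\btwo_i$ --- the constraints $a^{(1)}=\eta a^{(3)}\btwo$, $a^{(2)}=\eta a^{(3)}$, $a^{(4)}=a^{(3)}(1+\eta)\bone$ make it continuous at all three breakpoints --- which only strengthens your soundness argument for evaluating $F$ exactly at the points $n_k$.
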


\begin{proof}
  The pseudo-code for the desired algorithm is presented in Algorithm
  \ref{alg:sorting}.  Where $a_i^{(1)},..., a_i^{(4)}$ denote the
  parameters defining the functions $V_i(r, \b_i)$.
\begin{algorithm}
\caption{Sorting \label{alg:sorting}}
\begin{algorithmic}
\STATE{\strut $\mathcal{N} := \bigcup_{i=1}^m \{b_i^{(1)}, b_i^{(2)}, (1+\eta)
  b_i^{(1)}\}$;}
\STATE{(\strut $n_1,...,n_{3m})= {\bf Sort}(\mathcal{N})$;}
\STATE{\strut Set $\mathbf{c}_i:=(c_i^{(1)},c_i^{(2)},c_i^{(3)},c_i^{(4)}) = 0 $ for $i = 1, ..., 3m$;}
\STATE{\strut Set $c_1^{(1)} = -\sum_{i=1}^m a_i^{(1)} $;}
\FOR{$j = 2, ..., 3m$}
 \STATE{Set $\mat{c}_{j} = \mat{c}_{j-1}$;} 
 \IF{$n_{j-1} = \btwo_i$ for some $i$}
  \STATE{ $c_j^{(1)} = c_j^{(1)} + a_i^{(1)}$;} 
  \STATE{ $c_j^{(2)} = c_j^{(2)} - a_i^{(2)}$;}
 \ELSIF{$n_{j-1} = \bone_i$ for some $i$}
  \STATE{ $c_j^{(2)} = c_j^{(2)} + a_i^{(2)}$;}
  \STATE{ $c_j^{(3)} = c_j^{(3)} + a_i^{(3)}$;}
  \STATE{ $c_j^{(4)} = c_j^{(4)} - a_i^{(4)}$;}
 \ELSE
  \STATE{ $c_j^{(3)} = c_j^{(3)} - a_i^{(3)}$;}
  \STATE{ $c_j^{(4)} = c_j^{(4)} + a_i^{(4)}$;}
 \ENDIF
\ENDFOR
\end{algorithmic}
\end{algorithm}

We will prove that after running Algorithm~\ref{alg:sorting} we
can compute $F(n_j)$ in constant time using:
\begin{equation}
\label{eqn:evaluation}
F(n_j) = c_j^{(1)} + c_j^{(2)}  n_j + c_j^{(3)}  n_j + c_j^{(4)}. 
\end{equation}
This holds trivially for $n_1$ since by definition $n_1 \leq
\btwo_i$ for all $i$ and therefore $F(n_1) = -\sum_{i=1}^m
a_i^{(1)}$. Now, assume that \eqref{eqn:evaluation} holds for $j$, we
prove that it must also hold for $j + 1$. Suppose $n_j = b_i^2$
for some $i$ (the cases $n_j = \bone_i$ and $n_j = (1 + \eta) \bone_i$
can be handled in the same way).  Then $V_i(n_j, \b_i) = -a_i^{(1)}$
and we can write
\begin{equation*}
\sum_{k \neq i } V_k(n_j, \b_k) = F(n_j) - V(n_j, \b_i)
= (c^{(1)}_j + c_j^{(2)}n_j +  c_j^{(3)} n_j + c_j^{(4)}) + a_i^{(1)}.
\end{equation*}
Thus, by construction we would have:
\begin{align*}
c_{j+1}^{(1)} + c_{j+1}^{(2)}  n_{j+1} + c_{j+1}^{(3)}  n_{j+1} +
  c_{j+1}^{(4)}
& =  c^{(1)}_j + a_i^{(1)}+ (c_j^{(2)} - a_i^{(2)})n_{j+1} +
c_j^{(3)} n_{j+1} + c_j^{(4)}\\
& = (c^{(1)}_j + c_j^{(2)}n_{j+1} +  c_j^{(3)} n_{j+1} + c_j^{(4)}) +
a_i^{(1)} - a_i^{(2)}n_{j+1}\\
& = \sum_{k \neq i } V_k(n_{j+1}, \b_k) -a_i^{(2)} n_{j+1},
\end{align*}
where the last equality holds since the definition of $V_k(r, \b_k)$
does not change for $r \in [n_j, n_{j+1}]$ and $k\neq i$.  Finally,
since $n_j$ was a boundary point, the definition of $V_i(r, \b_i)$
must change from $-a_i^{(1)}$ to $-a_i^{(2)}r$, thus the last equation
is indeed equal to $F(n_{j+1})$. A similar argument can be given if
$n_j = \bone_i$ or $n_j = (1 + \eta) \bone_i$.

Let us analyze the complexity of the algorithm: sorting the set
$\mathcal{N}$ can be performed in $O(m \log m)$ and each iteration
takes only constant time. Thus the evaluation of all points can be
done in linear time. Having found all values, the
minimum can also be obtained in linear time.  Thus, the overall time
complexity of the algorithm is $O(m \log m)$.
\end{proof}

The algorithm just proposed can be straightforwardly extended
to solve the minimization of $F$ over a set of $r$-values bounded by
$\Lambda$, that is $\set{r\colon 0 \leq r \leq \Lambda}$. Indeed, we
then only need to compute $F(\bone_i)$ for $i \in [1, m]$ such that
$\bone_i < \Lambda$ and of course also $F(\Lambda)$, thus the
computational complexity in that regularized case remains $O(m \log
m)$.

\subsection{General case}

We first consider the case of a hypothesis set $H$ of linear functions
$\x \mapsto \w \cdot \x$ with bounded norm, $\| \w \| \leq \Lambda$,
for some $\Lambda \geq 0$. This can be immediately generalized to 
the case where a positive definite kernel is used.

The results of Theorem~\ref{th:margin} 
suggest seeking, for a fixed $\gamma \geq 0$, the vector $\w$ solution
of the following optimization problem: $\min_{\| \w \| \leq \Lambda}
\sum_{i = 1}^m L_\gamma(\w \cdot \x_i,\b_i)$.  Replacing the original
loss $L$ with $L_\gamma$ helped us remove the discontinuity of the
loss. But, we still face an optimization problem based on a sum of
non-convex functions. This problem can be formulated as a
DC-programming (difference of convex functions programming)
problem. Indeed, $L_\gamma$ can be decomposed as follows for all $(r,
\b) \in \cX \times \cB$: $L_\gamma(r, \b) = u(r, \b) - v(r, \b)$, with
the convex functions $u$ and $v$ defined by
\begin{align*}
 u(r,\b) &= -r \Ind_{r < \bone} + \tfrac{r - (1 + \gamma)
 \bone}{\gamma} \Ind_{r \geq \bone} \\
v(r, \b) &= (-r + \btwo) \Ind_{r < \btwo} + \tfrac{r - (1 +
  \gamma) \bone}{\gamma} \Ind_{r > (1 + \gamma) \bone}.
\end{align*}
Using the decomposition $L_\gamma = u - v$, our optimization problem
can be formulated as follows:
\begin{equation}
\label{opt:dc}
\min_{\w \in \Rset^N} \ U(\w) - V(\w) \qquad \text{subject to} \ \| \w \| \leq \Lambda,
\end{equation}
where $U(\w) = \sum_{i = 1}^m u(\w \cdot \x_i, \b_i)$ and $V(\w) =
\sum_{i = 1}^m v(\w \cdot \x_i, \b_i)$, which shows that it can be
formulated as a DC-programming problem.  The global minimum of the
optimization problem \eqref{opt:dc} can be found using a cutting plane
method \cite{HorstThoai1999}, but that method only converges in the
limit and does not admit known algorithmic convergence
guarantees.\footnote{Some claims of \cite{HorstThoai1999}, e.g.,
Proposition~4.4 used in support of the cutting plane algorithm, are
incorrect \cite{Tuy2002}.}  There exists also a branch-and-bound
algorithm with exponential convergence for DC-programming
\cite{HorstThoai1999} for finding the global minimum. Nevertheless, in
\cite{TaoAn1997}, it is pointed out that this type of combinatorial
algorithms fail to solve real-world DC-programs in high dimensions. In
fact, our implementation of this algorithm shows that the convergence
of the algorithm in practice is extremely slow for even moderately
high-dimensional problems. Another attractive solution for finding
the global solution of a DC-programming problem over a polyhedral
convex set is the combinatorial solution of Hoang Tuy
\cite{Tuy1964}. However, casting our problem as an instance of that
problem requires explicitly specifying the slope and
offsets for the piecewise linear function corresponding to a sum of
$L_\gamma$ losses, which admits an exponential cost in time and space.

An alternative consists of using the DC algorithm, a primal-dual
sub-differential method of Dinh Tao and Hoai An \cite{TaoAn1998},
(see also \cite{TaoAn1997} for a good survey).  This algorithm is
applicable when $u$ and $v$ are proper lower semi-continuous convex
functions as in our case. When $v$ is differentiable, the DC
algorithm coincides with the CCCP algorithm of Yuille and Rangarajan
\cite{YuilleRangarajan2003}, which has been used in several contexts
in machine learning and analyzed by
\cite{SriperumbudurLanckriet2009}. 

The general proof of convergence of the DC algorithm was given by
\cite{TaoAn1998}. In some special cases, the DC algorithm can be used
to find the global minimum of the problem as in the trust region
problem \cite{TaoAn1998}, but, in general, the DC algorithm or its
special case CCCP are only guaranteed to converge to a critical point
\cite{TaoAn1998,SriperumbudurLanckriet2009}.  Nevertheless, the number
of iterations of the DC algorithm is relatively small. Its
convergence has been shown to be in fact linear for DC-programming
problems such as ours \cite{YenPengWangLin2012}.  
The algorithm we are proposing goes one step further than that of
\cite{TaoAn1998}: we use DCA to find a local minimum but then restart
our algorithm with a new seed that is guaranteed to reduce the
objective function. Unfortunately, we are not in the same regime as in
the trust region problem of Dinh Tao and Hoai An \cite{TaoAn1998}
where the number of local minima is linear in the size of the
input. Indeed, here the number of local minima can be exponential in
the number of dimensions of the feature space and it is not clear to
us how the combinatorial structure of the problem could help us rule
out some local minima faster and make the optimization more tractable.

In the following, we describe more in detail the solution we propose
for solving the DC-programming problem \eqref{opt:dc}. The functions
$v$ and $V$ are not differentiable in our context but they admit a
sub-gradient at all points. We will denote by $\delta V(\w)$ an
arbitrary element of the sub-gradient $\partial V(\w)$, which
coincides with $\nabla V(\w)$ at points $\w$ where $V$ is
differentiable. The DC algorithm then coincides with CCCP, modulo the
replacement of the gradient of $V$ by $\delta V(\w)$. It consists of
starting with a weight vector $\w_0 \leq \Lambda$ and of iteratively
solving a sequence of convex optimization problems obtained by
replacing $V$ with its linear approximation giving $\w_t$ as a
function of $\w_{t - 1}$, for $t = 1, \ldots, T$:
$\w_t \in \argmin_{\| \w \| \leq \Lambda} \ U(\w) - \delta V(\w_{t - 1}) \cdot \w$.
This problem can be rewritten in our context as the following:
\begin{align}
\label{opt:qp}
\min_{\| \w \| \leq \Lambda, \mat{s}} & \ \sum_{i = 1}^m s_i - \delta V(\w_{t -
  1}) \cdot \w \mspace{10mu}
\\
\text{subject to} & \ (s_i \!\geq\! - \w \cdot \x_i) \!\wedge\!
\Big[\! s_i \!\geq\! \frac{1}{\gamma} \big(\w \cdot \x_i \!-\! (1 + \gamma)
\bone_i \big) \!\Big] \nonumber.
\end{align}
The problem is equivalent to a QP (quadratic-programming) problem
since the quadratic constraint can be replaced by a term of the form
$\lambda \| \w \|^2$ in the objective and thus can be tackled using
any standard QP solver.  We propose an algorithm that iterates along
different local minima, but with the guarantee of reducing the
function at every change of local minimum.  The algorithm is simple
and is based on the observation that the function $L_\gamma$ is positive
homogeneous. Indeed, for any $\eta > 0$ and $(r, \b)$,
\begin{align*}
L_\gamma (\eta r , \eta \b) 
&= -\eta \btwo \Ind_{\eta r <\eta \btwo} -\eta r \Ind_{\eta \btwo \leq \eta
r \leq \eta \bone} 
+ \frac{\eta r - (1 + \gamma) \eta \bone}{\gamma}
\Ind_{\eta \bone < \eta r < \eta (1 + \gamma) \bone} \\
& = \eta L_\gamma(r,\b).
\end{align*}

\begin{figure}[t]
\begin{algorithmic}
\hrule
\vspace{.1cm}
\STATE {\bf DC Algorithm}
\vspace{.1cm}
\hrule
\vspace{.1cm}
\STATE $\w \EQ \w_0 \mspace{90mu} \triangleright \text{initialization}$
\FOR {$t \geq 1$} 
\STATE $\w_t \EQ \text{\sc DCA}(\w) \quad \triangleright \text{{\sc DCA} algorithm}$
\STATE $\w \EQ \text{\sc Optimize}(\text{objective}, \text{fixed direction } \w_t/\|
  \w_t \|)$
\ENDFOR
\end{algorithmic}
\hrule
\caption{Pseudocode of our DC-programming algorithm.}
\vspace{-.3cm}
\label{fig:algo}
\end{figure}

Minimizing the objective function of \eqref{opt:dc} in a fixed
direction $\u$, $\| \u \| = 1$, can be reformulated as follows:
$\min_{0 \leq \eta \leq \Lambda} \ \sum_{i = 1}^m L_\gamma(\eta \u
\cdot \x_i, \b_i)$. Since for $\u \cdot \x_i \leq 0$ the function
$\eta \mapsto L_\gamma(\eta \u \cdot \x_i, \b_i)$ is constant equal to
$-\btwo_i$ the problem is equivalent to solving 
\begin{equation*}
\min_{0 \leq \eta
  \leq \Lambda} \sum_{\u \cdot \x_i >0} L_\gamma(\eta \u \cdot \x_i,
\b_i).
\end{equation*}
Furthermore, since $L_\gamma$ is positive homogeneous, for
all $i \in [1, m]$ with $\u \cdot \x_i > 0$, $L_\gamma(\eta \u \cdot
\x_i, \b_i) = (\u \cdot \x_i) L_\gamma(\eta, \b_i/(\u \cdot
\x_i))$. But $\eta \mapsto (\u \cdot \x_i) L_\gamma(\eta, \b_i/(\u
\cdot \x_i))$ is a $v$-function and thus the problem can efficiently
optimized using the combinatorial algorithm for the no-feature case
(Section~\ref{sec:algo-no-feature}). This leads to the optimization
algorithm described in Figure~\ref{fig:algo}.  The last step of each
iteration of our algorithm can be viewed as a \emph{line search} and
this is in fact the step that reduces the objective function the most
in practice. This is because we are then precisely minimizing the
objective function even though this is for some fixed direction. Since
in general this line search does not find a local minimum (we are
likely to decrease the objective value in other directions that are
not the one in which the line search was performed) running DCA helps
us find a better direction for the next iteration of the line search.

\section{Experiments}
\label{sec:experiments}

In this section we report the results of several experiments done on
synthetic, as well as realistic data demonstrating the benefits of our
algorithm. Since the use of features for reserve price optimization is
a completely novel idea, we are not aware of any baseline for comparison with
our algorithm. Therefore, its performance is measured against three
natural strategies that we now describe.

As mentioned before, a standard solution for solving this problem would
be the use of a convex surrogate loss. For that reason, we compare against
the solution of the regularized
minimization of the convex surrogate loss $L_\alpha$ shown in
Figure~\ref{fig:realLoss}(b) parametrized by $\alpha \in [0, 1]$ and
defined by
\begin{equation*}
L_\alpha(r, \b) \!=\!
\begin{cases} 
-r  \mspace{100mu} \text{if } r < \bone + \alpha (\btwo - \bone)\\
\left(\frac{(1 - \alpha) \bone + \alpha \btwo}{\alpha(\bone - \btwo)}
\right) \!(r - \bone) \mspace{40mu} \text{otherwise}.
\end{cases}
\end{equation*}
A second alternative consists of using ridge regression to 
estimate the first bid and use its prediction
as the reserve price.
A third algorithm consists of minimizing the loss while
ignoring the feature vectors $\x_i$, i.e., solving the problem
$\min_{r \leq \Lambda} \sum_{i =1}^n L(r, \b_i)$. It is worth
mentioning that this third approach is very similar to what
advertisement exchanges currently use to suggest reserve prices to
publishers. By using the empirical version of
Equation~\eqref{eq:revenue}, we see this algorithm is equivalent
to finding the empirical distribution of bids and optimizing the
expected revenue with respect to this empirical distribution as in
\citep{ostrovsky2011reserve} and
\citep{Cesa-BianchiGentileMansour2013}.

\subsection{Artificial data sets}
We generated $4$ different synthetic data sets with different
correlation levels between features and bids.
For all our experiments, the feature vectors $\x \in \Rset^{21}$ were
generated in the following way: $\tilde{\x} \in \Rset^{20}$ was
sampled from a standard Gaussian distribution and $\x =
(\tilde{\x},1)$ was created by adding an offset feature. We now describe the
bid generating process for each of the experiments as a function
of the feature vector $\x$. For our first three experiments, shown in 
Figure~\ref{fig:artificial}(a)-(c), the highest bid and second highest
bid were set to
$\max \Big( \Big|\sum_{i=1}^{21} x_i \Big| + \e_1,
\Big|\sum_{i=1}^{21} \frac{x_i}{2} + \e_2 \Big| \Big)_+$ and $\min \Big(
\Big|\sum_{i=1}^{21} x_i \Big| + \e_1, \Big|\sum_{i=1}^{21}
\frac{x_i}{2} + \e_2 \Big| \Big)_+$ respectively. Where $\e_i$ is a Gaussian
random variable with mean $0$. The standard deviation of the Gaussian
noise was varied over the set $\{0, 0.25, 0.5\}$.

For our last artificial experiment we used a generative model
supported on previous empirical observations
\citep{ostrovsky2011reserve, lahaiepennock2007}: bids were generated
by sampling two values from a lognormal
distribution with means $\x \cdot \w $ and $\frac{\x \cdot \w}{2}$ and
standard deviation $0.5$. Where $\w$ was random vector sampled from a
standard Gaussian distribution.

For all our experiments, the parameters $\Lambda, \gamma$ and $\alpha$
were tuned by using a validation set of the same number of examples as
the training set. The test set consisted of $5\mathord{,}000$ examples
drawn from the same distribution as the training set. Each experiment
was repeated 10 times and the mean revenue of each algorithm is shown in
Figure~\ref{fig:artificial}. The plots are normalized in such a
way that the revenue obtained by setting no reserve price is equal to
$0$ and the maximum possible revenue (which can be obtained by setting
the reserve price equal to the highest bid) is equal to $1$. The
performance of the ridge regression algorithm is not included in
Figure~\ref{fig:artificial}(d) as it was too inferior to be comparable with
the performance of the other algorithms.

By inspecting the results in Figure~\ref{fig:artificial}(a) we see
that even in the simplest, noiseless scenario our algorithm
outperforms all other techniques. It is also worth noticing that the
use of ridge regression is actually worse than using no features for
training. This fact is easily understood by noticing that the
square loss used in regression is symmetric. Therefore, we can expect
several reserve prices to be above the highest bid, which are
equivalent to zero revenue auctions. Another notable feature is that
as the noise level increases, the performance of feature-based
algorithms decreases. This is however true of any machine learning
algorithm: if the features become less relevant to the prediction
task, the performance of the algorithm will suffer. However, for the
convex surrogate algorithm something more critical occurs: the
performance of this algorithm actually decreases as the sample size
increases, which shows that in general learning with a convex
surrogate is not possible. This is an empirical verification of the
inconsistency result provided in Section~\ref{sec:surrogate}. This lack
of calibration can also seen in Figure~\ref{fig:artificial}(d), where
in fact the performance of this algorithm approaches the use of no
reserve price.

In order to better understand the reason behind the performance discrepancy
between feature-based algorithms, we analyze the reserve prices
offered by each algorithm. In Figure~\ref{fig:realdata}(a) we see that
the convex surrogate algorithm tends to offer lower reserve
prices. This should be intuitively clear as high reserve prices are
over-penalized by the chosen convex surrogate as shown in
Figure~\ref{fig:sumloss}.  On the other hand, reserve prices suggested
by the regression algorithm seem to be concentrated and symmetric
around their mean, therefore we can infer that about 50\% of the
reserve prices offered will be higher than the highest bid thereby
yielding zero revenue.

\begin{figure}[t]
  \centering
  \begin{tabular}{cc}
   (a) \includegraphics[scale=.5]{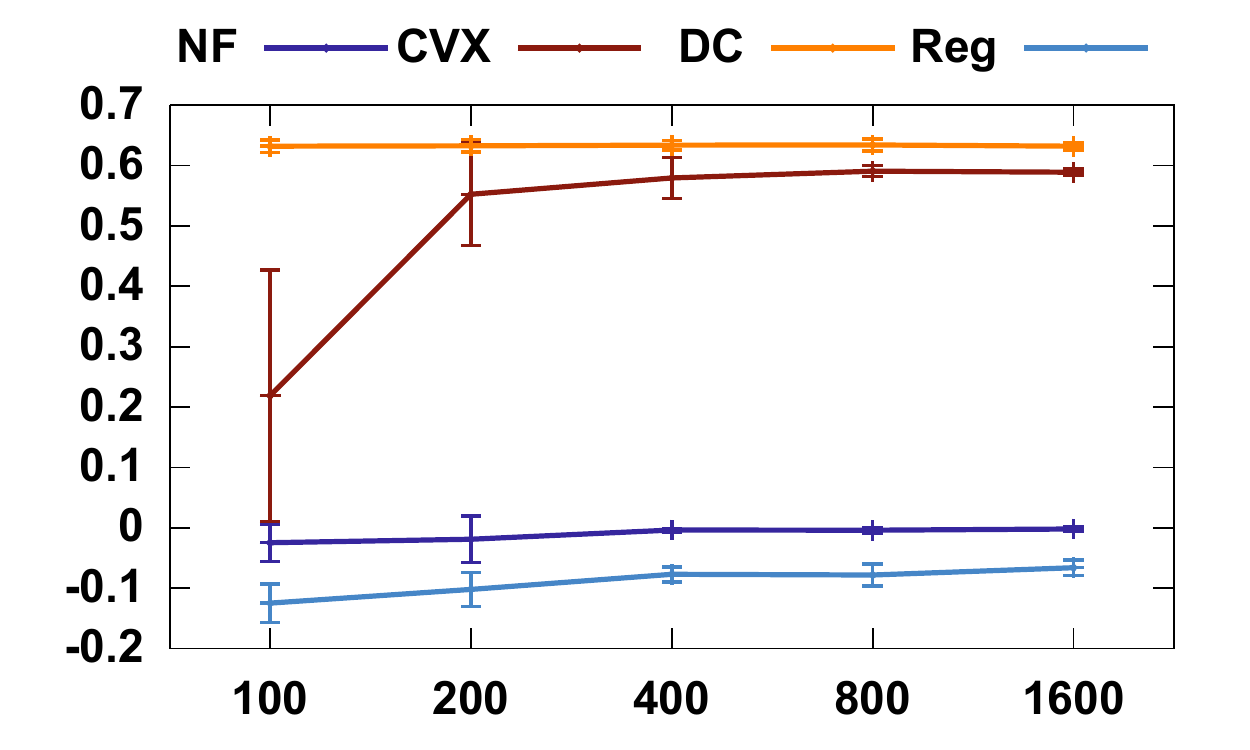}
    &(b)\includegraphics[scale=.5]{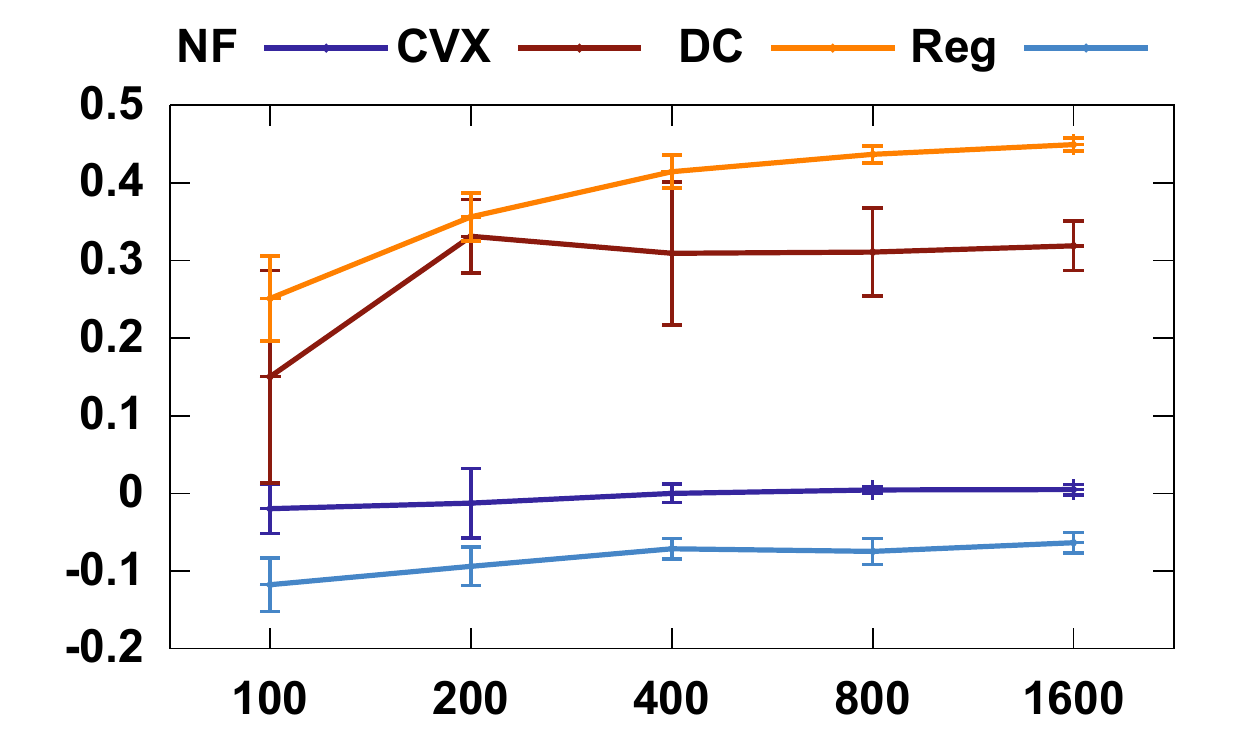} \\
    (c) \includegraphics[scale=.5]{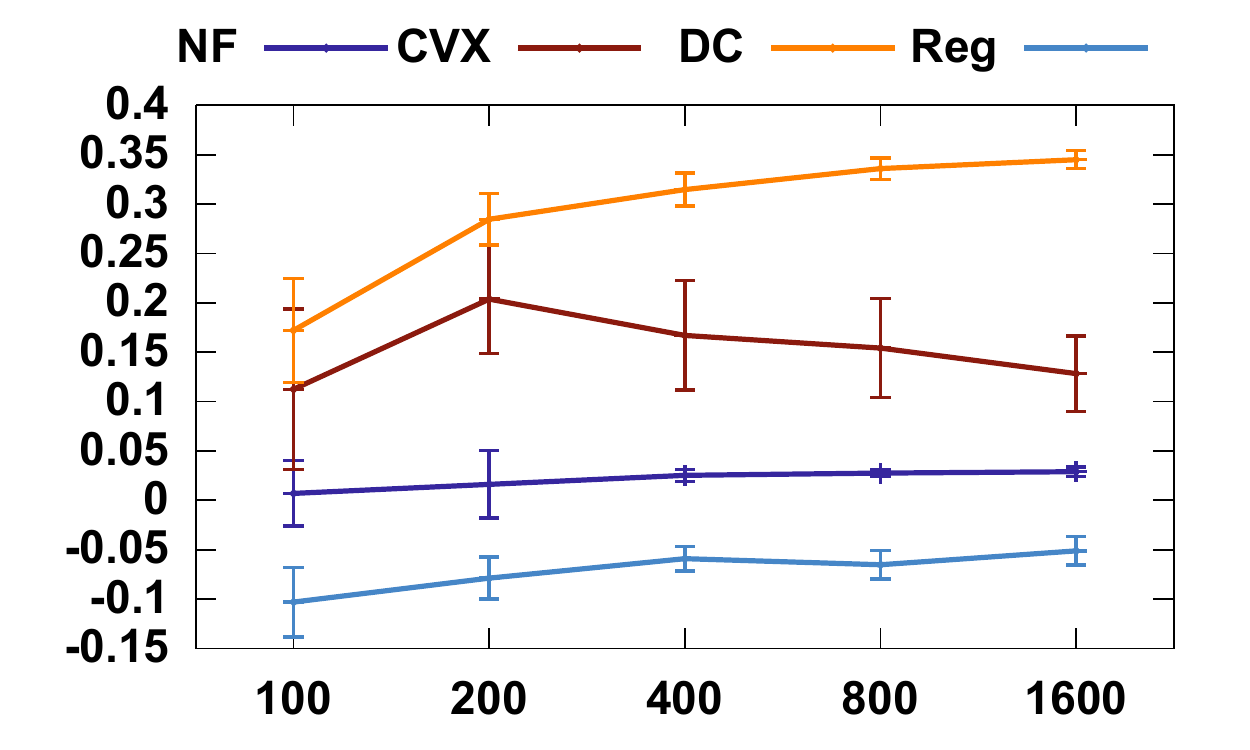} 
    &(d)\includegraphics[scale=.5]{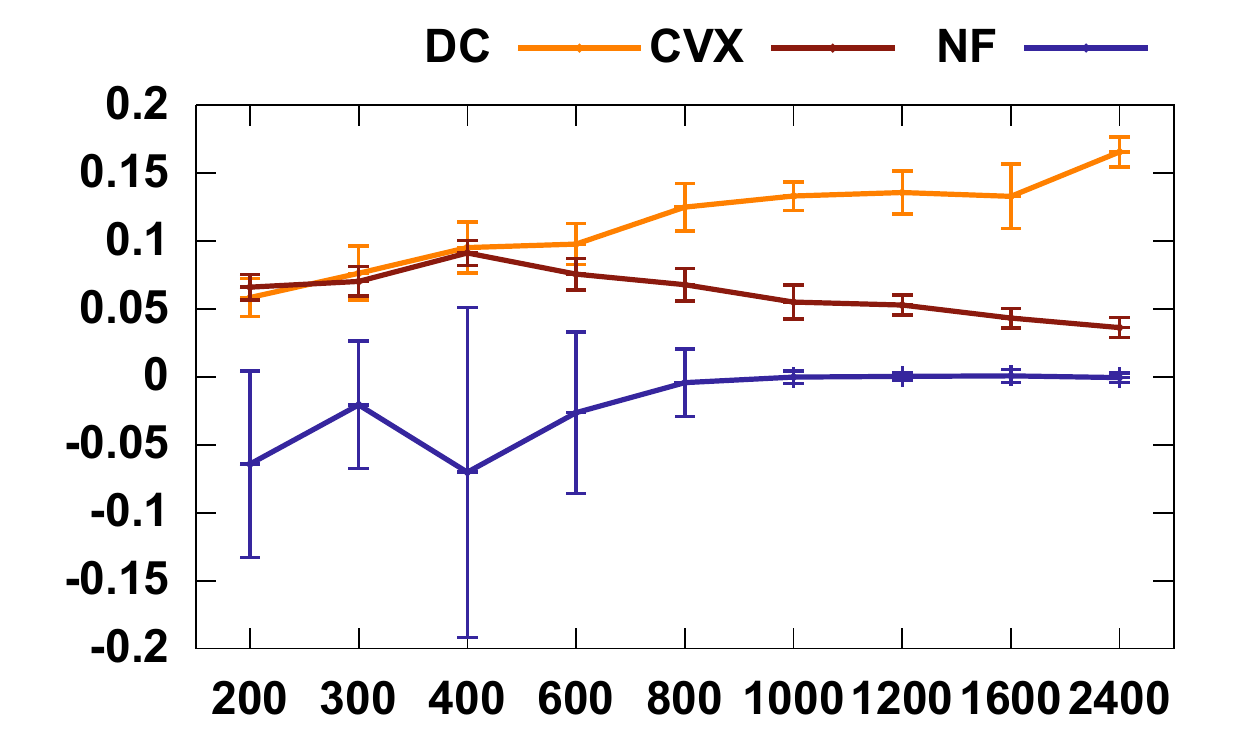}
  \end{tabular}
  \caption{Plots of expected revenue against sample size for different
algorithms: DC algorithm (DC), convex surrogate (CVX) and ridge
regression (Reg).  For (a)-(c) bids are generated with different noise
standard deviation (a) 0, (b) 0.25, (c) 0.5. The bids in (d) were
generated using a generative model.}
  \label{fig:artificial}
\end{figure}

\subsection{Realistic data sets}

Due to confidentiality and proprietary reasons, we cannot present
empirical results with AdExchange data. However, we were able to
secure an eBay data set consisting of collector sport cards. These
cards were sold using a second price auction with reserve and the full
data set can now be found in the following website: \newline \url{
http://cims.nyu.edu/~munoz/data}. Some other sources of auction data
sets are accessible
(e.g. \url{http://modelingonlineauctions.com/datasets}), but no
feature is available in those data sets.  To the best of our
knowledge, with exception of the data set used here, there is no
publicly available data set for online auctions including features
that could be readily used with our algorithm. The features include
information about the seller such as positive feedback percent, seller
rating and seller country; as well information about the card such as
whether the player is in the sport's hall of fame. The final dimension
of the feature vectors is $78$. The values of these features are both
continuous and categorical.

Since the highest bid is not reported by eBay, our algorithm cannot be
used straightforwardly on this data set. In order to generate highest
bids, we calculated the mean price of each object (each card was
generally sold more than once) and set the highest bid to be the
maximum between this average and the second highest bid.

On Figure~\ref{fig:realdata}(b) we show the revenue obtained by using
our DC algorithm, a convex surrogate and the algorithm that ignores
features. We also show the results obtained by using no reserve price
(NR) and highest possible revenue (HB). From the whole data set 2000
examples were randomly selected for training, validation and
testing. This experiment was repeated 10 times and
Figure~\ref{fig:realdata}(b) shows the mean revenue of each algorithm
and standard deviations.

\begin{figure}[t]
  \centering
\begin{tabular}{cc}
(a) \includegraphics[scale=.5]{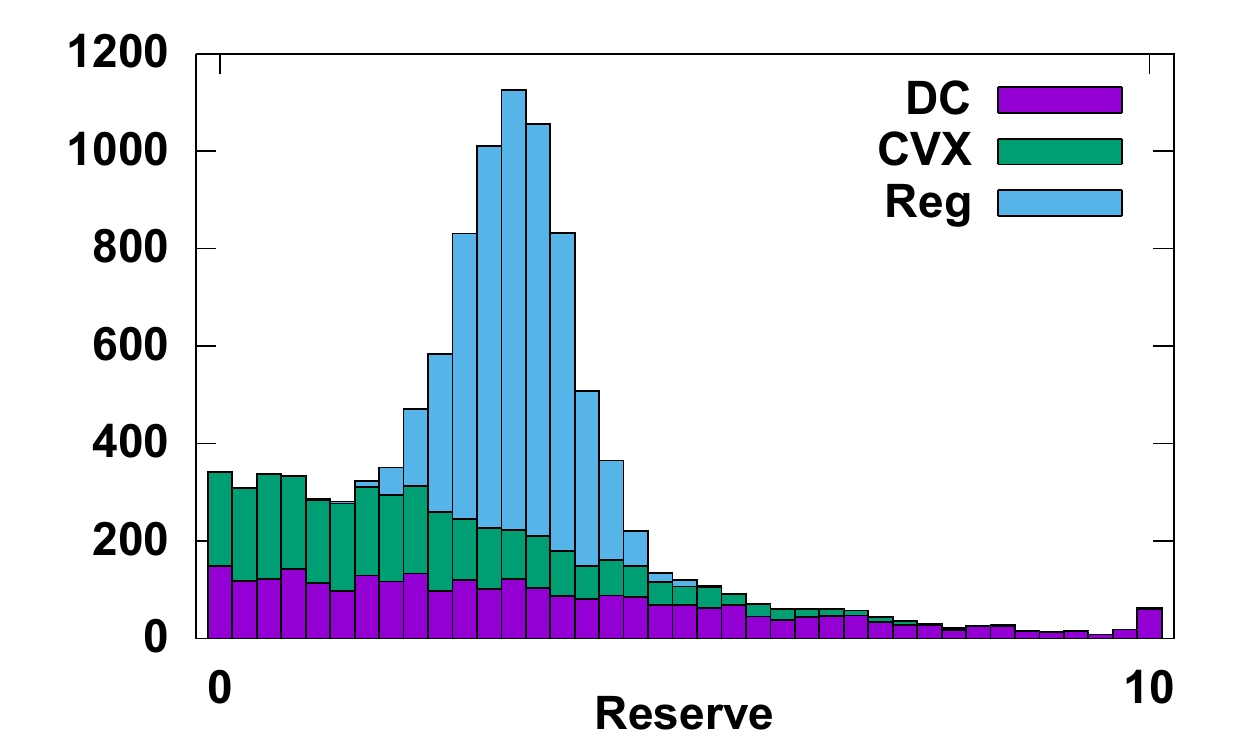} &
(b)\includegraphics[scale=.5]{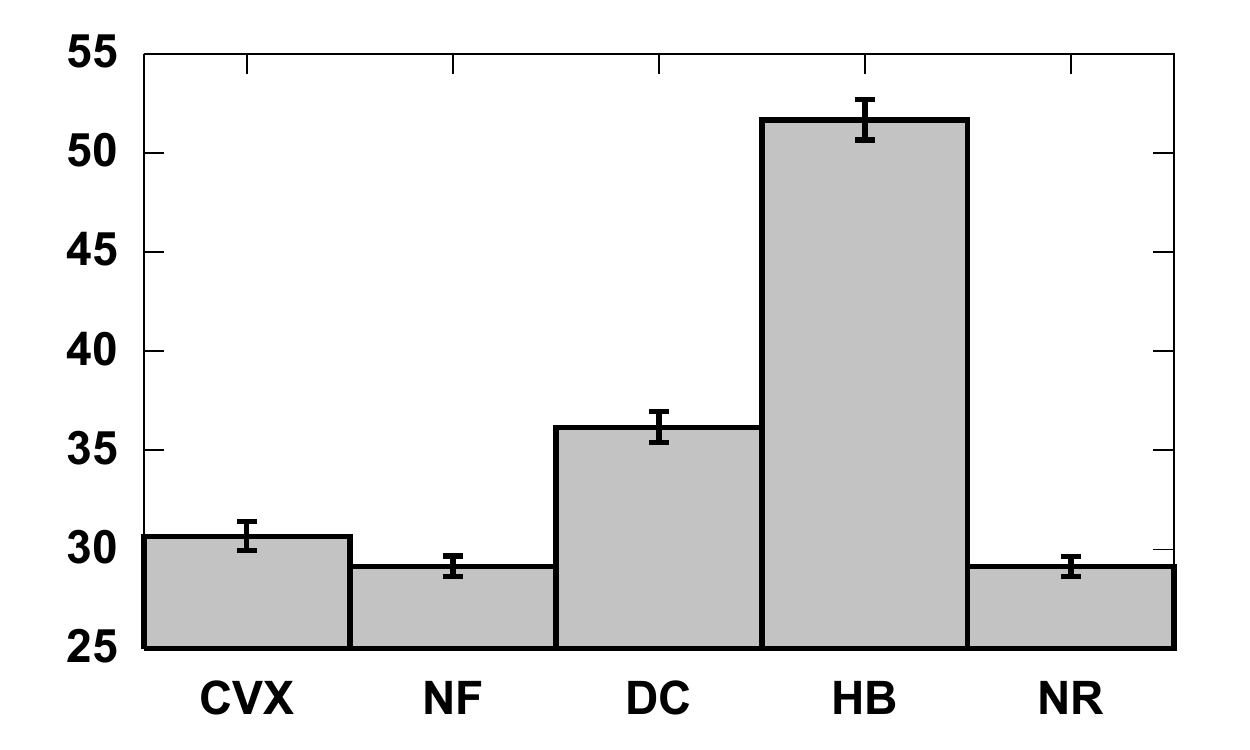}
\end{tabular}
  \caption{(a) Distribution of reserve prices for each algorithm. The
algorithms were trained on 800 samples using noisy bids with standard
deviation $0.5$. (b) Results of the eBay data set. }
  \label{fig:realdata}
\end{figure}
\section{Conclusion}

We presented a comprehensive theoretical and algorithmic analysis of
the learning problem of revenue optimization in second-price auctions
with reserve. The specific properties of the loss function for this
problem required a new analysis and new learning guarantees. The
algorithmic solutions we presented are practically applicable to
revenue optimization problems for this type of auctions in most
realistic settings. Our experimental results further demonstrate their
effectiveness. Much of the analysis and algorithms presented, in
particular our study of calibration questions, can also be of interest
in other learning problems.

\section*{Acknowledgments}

We thank Afshin Rostamizadeh and Umar Syed for several discussions
about the topic of this work and ICML reviewers for useful
comments. We also thank Jay Grossman for giving us access to the eBay
data set used in this paper. This work was partly funded by the NSF
award IIS-1117591.

\newpage
\bibliography{res}

\newpage
\appendix

\section{Contraction lemma}

The following is a version of Talagrand's contraction
lemma~\cite{LedouxTalagrand91}.  Since our definition of Rademacher
complexity does not use absolute values, we give an explicit proof
below.

\begin{lemma}
\label{lemma:contraction}
  Let $H$ be a hypothesis set of functions mapping $\cX$ to $\Rset$
  and $\Psi_1, \ldots, \Psi_m$, $\mu$-Lipschitz functions for some
  $\mu > 0$.  Then, for any sample $S$ of $m$ points $x_1, \ldots, x_m
  \in \cX$, the following inequality holds
\begin{align*}
\frac{1}{m} \E_{\ssigma} \left[\sup_{h \in H} \sum_{i = 1}^m \sigma_i
(\Psi_i \circ h) (x_i) \right] & \leq \frac{\mu}{m} \E_{\ssigma}
\left[\sup_{h \in H} \sum_{i = 1}^m \sigma_i h (x_i)\right] \\
&= \mu\, \h \R_S(H).
\end{align*}
\end{lemma}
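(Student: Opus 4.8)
The plan is to prove the inequality by the standard technique of ``peeling off'' one coordinate at a time, removing a single $\Psi_i$ at each step while keeping the bound intact. Concretely, I would introduce for each $k \in [0, m]$ the intermediate quantity
\begin{equation*}
A_k = \frac{1}{m}\E_{\ssigma}\left[\sup_{h \in H}\left(\sum_{i=1}^{k}\sigma_i (\Psi_i \circ h)(x_i) + \mu\sum_{i=k+1}^{m}\sigma_i h(x_i)\right)\right],
\end{equation*}
so that $A_m$ is the left-hand side and $A_0 = \mu\,\h\R_S(H)$. The goal then reduces to proving the chain of inequalities $A_k \le A_{k-1}$ for each $k$, since telescoping these yields $A_m \le A_0$, which is exactly the claim.

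First I would fix an index $k$ and condition on all Rademacher variables except $\sigma_k$, so that the only randomness left is the single sign $\sigma_k \in \{-1, +1\}$, each with probability $1/2$. Writing the expectation over $\sigma_k$ explicitly and abbreviating the ``frozen'' part of the sum as a function $u(h)$ that does not involve coordinate $k$, the key step is to compare the two suprema corresponding to $\sigma_k = +1$ and $\sigma_k = -1$. For any $\epsilon > 0$ one can pick near-optimizers $h_1, h_2 \in H$ achieving the two suprema up to $\epsilon$; the crux is then to bound $(\Psi_k \circ h_1)(x_k) - (\Psi_k \circ h_2)(x_k)$ using the $\mu$-Lipschitz property, namely $|\Psi_k(a) - \Psi_k(b)| \le \mu|a - b|$, and to recombine the terms so that the $\Psi_k$ is replaced by $\mu$ times the raw evaluation $h(x_k)$. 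Since the definition of the Rademacher complexity here does not use absolute values, I would be careful to handle the sign of $h_1(x_k) - h_2(x_k)$ correctly; one exploits the freedom of the sign $\sigma_k$ so that the same bound applies regardless of which of the two near-optimizers has the larger raw value, which is precisely why the factor-of-$1/2$ averaging over $\sigma_k$ makes the argument go through.

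The main obstacle, and the only genuinely delicate point, is this sign bookkeeping in the single-coordinate comparison: without absolute values one cannot simply invoke $|\Psi_k(a)-\Psi_k(b)|\le \mu|a-b|$ and symmetrize freely, so I would phrase the step as showing that for the pair of near-optimizers the quantity $u(h_1) + (\Psi_k\circ h_1)(x_k) + u(h_2) - (\Psi_k \circ h_2)(x_k)$ is dominated by $u(h_1) + \mu\, h_1(x_k) + u(h_2) - \mu\, h_2(x_k)$ (or the symmetric variant), which is exactly the expression appearing inside $A_{k-1}$ after averaging over $\sigma_k$. Letting $\epsilon \to 0$ then gives $A_k \le A_{k-1}$. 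Finally, I would note that the equality $\frac{\mu}{m}\E_{\ssigma}[\sup_{h\in H}\sum_{i=1}^m \sigma_i h(x_i)] = \mu\,\h\R_S(H)$ is immediate from the definition of the empirical Rademacher complexity, completing the proof.
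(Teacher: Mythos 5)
Your proposal is correct and follows essentially the same route as the paper's proof: conditioning on all Rademacher variables but one, comparing near-optimizers $h_1, h_2$ for the two values of the remaining sign, using the sign $s = \sgn(h_1(x_k) - h_2(x_k))$ together with $\mu$-Lipschitzness to replace $\Psi_k$ by $\mu$ times the raw evaluation, and then repeating coordinate by coordinate. Your explicit telescoping via the quantities $A_k$ is just a cleaner formalization of the paper's closing remark that one ``proceeds in the same way for all other $\sigma_i$'s,'' and your handling of unattained suprema by $\epsilon$-near-optimizers matches the paper's as well.
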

\begin{proof}
  The proof is similar to the case where the functions $\Psi_i$ are
  all equal. Fix a sample $S = (x_1, \ldots, x_m)$. Then, we
  can rewrite the empirical Rademacher complexity as follows:
\begin{equation*}
\mspace{-20mu}\frac{1}{m}\E_{\ssigma}\Big[ \sup_{h \in H} \sum_{i = 1}^m \sigma_i (\Psi_i
\circ h)(x_i) \Big] = 
\mspace{10mu}\frac{1}{m} \E_{\sigma_1, \ldots, \sigma_{m - 1}} \Big[ \E_{\sigma_m}\Big[
\sup_{h \in H} u_{m - 1}(h)  + \sigma_m (\Psi_m \circ h)(x_m) \Big]
\Big] \,,
\end{equation*}
where $u_{m - 1}(h) = \sum_{i = 1}^{m - 1} \sigma_i (\Psi_i \circ
h)(x_i)$. Assume that the suprema can be attained and let $h_1, h_2\in
H$ be the hypotheses satisfying
\begin{flalign*}
& u_{m - 1}(h_1) + \Psi_m(h_1(x_m))  =   \sup_{h \in H} u_{m - 1}(h) +  \Psi_m (h(x_m)) \\
& u_{m - 1}(h_2) - \Psi_ m(h_2(x_m))  = \sup_{h \in H} u_{m - 1}(h)  -  \Psi_m(h(x_m)).
\end{flalign*}
When the suprema are not reached, a similar argument to what follows
can be given by considering instead hypotheses that are $\e$-close to the
suprema for any $\e > 0$.

By definition of expectation, since $\sigma_m$ uniform distributed
over $\set{-1, + 1}$, we can write
\begin{align*}
\E_{\sigma_m}\Big[ \sup_{h \in H} u_{m - 1}(h)  + \sigma_m (\Psi_m
\circ h)(x_m) \Big] &
= \frac{1}{2} \sup_{h \in H} u_{m - 1}(h)  +  (\Psi_m
\circ h)(x_m) 
+ \frac{1}{2} \sup_{h \in H} u_{m -  1}(h)  -  (\Psi_m \circ h)(x_m) \\
& = \frac{1}{2} [u_{m - 1}(h_1) + (\Psi_m \circ h_1)(x_m)]
+ \frac{1}{2} [u_{m - 1}(h_2) - (\Psi_m \circ h_2)(x_m)].
\end{align*}
Let $s = \sgn( h_1(x_m) - h_2(x_m) )$.  Then, the previous equality implies
\begin{align*}
\E_{\sigma_m}\Big[ \sup_{h \in H} u_{m - 1}(h)  + \sigma_m (\Psi_m
\circ h)(x_m) \Big]
& =  \frac{1}{2} [u_{m - 1}(h_1) + u_{m - 1}(h_2) + s \mu ( h_1(x_m) -
h_2(x_m))] \\
& = \frac{1}{2} [u_{m - 1}(h_1) + s \mu h_1(x_m)] 
 + \frac{1}{2} [u_{m -1}(h_2) - s \mu h_2(x_m)] \\
& \leq \frac{1}{2} \sup_{h \in H} [u_{m - 1}(h) + s \mu h(x_m)] 
 +\frac{1}{2} \sup_{h \in H} [u_{m -1}(h) - s \mu h(x_m)] \\
& = \E_{\sigma_m}\Big[ \sup_{h \in H} u_{m - 1}(h) + \sigma_m \mu
h(x_m) \Big],
\end{align*}
where we used the $\mu-$Lipschitzness of $\Psi_m$ in the first
equality and the definition of expectation over $\sigma_m$ for the
last equality.  Proceeding in the same way for all other $\sigma_i$'s
($i \neq m$) proves the lemma.
\end{proof}

\ignore{
\begin{corollary}
\label{cor:margin} 
For any $\delta > 0$, with probability at least $1 - \delta$ over the
choice of a sample $S$ of size $m$, the following holds for all
$\gamma \in (0, 1]$ and $h \in H$:
\begin{equation*}
\cL_\gamma(h) \leq \h \cL_\gamma(h) + \frac{2}{\gamma} \R_m(H) + 
M \Bigg[\frac{\sqrt{\log \log_2 \frac{1}{\gamma}}}{\sqrt{m}}
 + \sqrt{\frac{\log \frac{1}{\delta}}{2m}} \Bigg].
\end{equation*}
\end{corollary}
\begin{proof}
  Consider two sequences $(\gamma_k)_{k \geq 1}$ and $(\e_k)_{k \geq
    1}$, with $\e_k \in (0, 1)$.  By theorem~\ref{th:margin}, for any
  fixed $k \geq 1$,
\begin{equation*}
  \Pr\left[\cL_{\gamma_k}(h) - \h \cL_{\gamma_k}(h)\mspace{-2mu} >\mspace{-2mu} \frac{2}{\gamma_k} \R_m ( H ) + M\e_k \right] \mspace{-2mu} \leq \mspace{-2mu} \exp(-2m\e_k^2).
\end{equation*}
Choose $\e_k = \e + \sqrt{\frac{\log k}{m}}$, then, by the union
bound, 
\begin{flalign*}
&\Pr \left[\exists k\colon \cL_{\gamma_k}(h) - \h \cL_{\gamma_k}(h) > \frac{1}{\gamma_k} \R_m ( H ) + M\e_k
  \right]\\
& \leq \sum_{k \geq 1} \exp\big[-2m (\e + \sqrt{(\log k)/m})^2 \big]\\
& \leq \big(\sum_{k \geq 1} 1/k^2 \big) \exp(-2m \e^2) \\
& = \frac{\pi^2}{6} \exp(-2m \e^2) \leq 2 \exp(-2m \e^2).
\end{flalign*}
For any $\gamma \in (0, 1]$, there exists $k \geq 1$ such that $\gamma
\in (\gamma_k, \gamma_{k - 1})$ with $\gamma_k = 1/2^k$.  For such a
$k$, $\frac{1}{\gamma_{k - 1}} \leq \frac{1}{\gamma}$, $\gamma_{k - 1}
\leq \frac{\gamma}{2}$, and $\sqrt{\log (k - 1)} = \sqrt{\log \log_2
  (1/\gamma_{k - 1})} \leq \sqrt{\log \log_2 (1/\gamma)}$. Since for
any $h \in H$, $\cL_{\gamma_{k - 1}}(h) \leq \cL_\gamma(h)$, we can write
\begin{flalign*}
&\Pr \left[\cL(h) \mspace{-2mu} - \mspace{-2mu} \h \cL_{\gamma}(h)\mspace{-1mu} >\mspace{-1mu}
  \frac{2}{\gamma} \R_m ( H )\mspace{-2mu} +
 \mspace{-2mu} M\Big( K(\gamma) + \e\Big)
  \right] \\
&\leq \exp(-2m \e^2),
\end{flalign*}
where $K(\gamma) = \sqrt{\log \log_2 \frac{1}{\gamma}}$.
This concludes the proof.
\end{proof}

\begin{theorem}[convex surrogates]
There exists no non-constant function $L_c \colon \Rset \times \Rset_+
\to \Rset$ convex with respect to its first argument and
satisfying the following conditions: 
\vspace{-.15cm}
\begin{itemize}
\item for any $b_0 \in \Rset_+, \lim_{b \to b_0^-} L_c(b_0, b) = L_c(b_0, b_0)$.
\vspace{-.2cm}
\item for any distribution $D$ on $\Rset_+$, there exists a non-negative minimizer
  $\rstar \in \argmin_r \E_{b \sim D}[\tl{L}(r, b)]$ such that $\min_r
  \E_{b \sim D} L_c(r,b) = \E_{b \sim D} L_c(r^*,b)$.
\end{itemize}
\end{theorem}

\begin{proof}
  For any loss $L_c$ satisfying the assumptions, we can define a loss
  $L'_c$ by $L'_c(r,b) = L_c(r,b) - L_c(b,b)$. $L'_c$ then also
  satisfies the assumptions. Thus, without loss of generality, we can
  assume that $L_c(b,b) = 0$. Furthermore, since $\tl{L}(\cdot, b)$ is
  minimized at $b$ we must have $L_c(r, b) \geq L_c(b,b) = 0$.

  Notice that for any $b_1 \in \Rset+$, $b_1 < b_2 \in \Rset_+$ and
  $\mu \in [0,1]$, the minimizer of $\E_\mu(\tl{L}(r,b)) = \mu
  \tl{L}(r,b_1) + (1 - \mu) \tl{L}(r, b_2)$ is either $b_1$ or
  $b_2$. In fact, by definition of $\tl{L}$, the solution is $b_1$ as
  long as $-b_1 \leq -(1 - \mu) b_2$, that is, when $\mu \geq
  \frac{b_2 - b_1}{b_2}$. Since the minimizing property of $L_c$
  should hold for every distribution we must have
\begin{equation}
\label{eq:ineq}
    \mu L_c(b_1, b_1) + (1 - \mu) L_c(b_1,b_2) 
\leq \mu L_c(b_2, b_1) + (1 - \mu) L_c(b_2, b_2)
\end{equation}
when $\mu \geq \frac{b_2 - b_1}{b_2}$ and the reverse inequality
otherwise. This implies that \eqref{eq:ineq} must hold as an
equality when $\mu = \frac{b_2 - b_1}{b_2}$. This, combined with the
equality $L_c(b,b) = 0$ valid for all $b$, yields
\begin{equation}
\label{eq:equal}
b_1 L_c(b_1, b_2) = (b_2 - b_1)L_c(b_2, b_1).
\end{equation}
Dividing by $b_2 - b_1$ and taking the limit $b_1 \rightarrow b_2$
result in
\begin{equation}
  \label{eq:leftdiff}
  \lim_{b_1 \rightarrow b_2^-} b_1 \frac{L_c(b_1, b_2)}{b_2 - b_1}  
= \lim_{b_1 \rightarrow b_2^-}L_c(b_2, b_1).
\end{equation}
By convexity of $L_c$ with respect to the first argument, we know that
the left-hand side is well-defined and is equal to $-b_1 D^-_rL_c(b_2,
b_2)$, where $D^-_rL_c$ denotes the left derivative of $L_c$ with respect
to the first coordinate. By assumption, the right-hand side is equal
to $L_c(b_2, b_2) = 0$. Since $b_1 > 0$, this implies that $D^-_r
L_c(b_2, b_2) = 0$.

Let $\mu < \frac{b_2 - b_1}{b_2}$. For this choice of $\mu$,
$\E_\mu(L_c(r,b))$ is minimized at $b_2$. This implies:
\begin{equation}
\label{eq:leftdiffineq}
\mu D^-_rL_c(b_2, b_1) + (1- \mu) D^-_rL_c(b_2, b_2) \leq 0.
\end{equation}
However, convexity implies that $D^-_rL_c(b_2,b_1) \geq D^-_rL_c(b_1,b_1) =
0$ for $b_2 \geq b_1$. Thus, inequality \eqref{eq:leftdiffineq} can
only be satisfied if $D^-_rL_c(b_2, b_1) = 0$.

Let $D^+_rL_c$ denote the right derivative of $L_c$ with
respect to the first coordinate. The convexity of $L_c$ implies that
$D_r^-L_c(b_1, b_1) \leq D_r^+L_c(b_1, b_1) \leq D_r^-L_c(b_2, b_1) $ for $b_2
> b_1$. Hence, $D_r^+L_c(b_1, b_1)  =  0$. If we let $\mu > \frac{b_2-
  b_1}{ b_2}$ then $b_1$ is a minimizer for $\E_\mu(L_c(r, b))$ and
\begin{equation*}
  \mu D^+_r(b_1,b_1) + (1 - \mu) D^+_rL_c(b_1, b_2) \geq 0. 
\end{equation*}
As before, since $b_1 < b_2$, $D_r^+(b_1, b_2) \leq
D_r^+(b_2,b_2) = 0$ and we must have $D^+_rL_c(b_1,b_2) = 0$
for this inequality to hold.

We have therefore proven that for every $b$, if $r \geq b$, then
$D^-_rL_c(r, b) = 0$, whereas if $r \leq b$ then $D^+_rL_c(r,b) =
0$. It is not hard to see that this implies $D_rL_c(r,b) = 0$ for all
$(r, b)$ and thus that $L_c(\cdot, b)$ must be a constant. In
particular, since $L_c(b,b) = 0$, we have $L_c\equiv 0$.
\end{proof}

}

\end{document}